\theoremstyle{plain}
\newtheorem{theorem}{Theorem}[section]
\newtheorem{proposition}[theorem]{Proposition}
\newtheorem{lemma}[theorem]{Lemma}
\theoremstyle{definition}
\newtheorem{definition}[theorem]{Definition}
\theoremstyle{remark}
\newtheorem{remark}[theorem]{Remark}
\icmltitlerunning{}
\begin{document}

\twocolumn[
\icmltitle{Domain Adaptation via Rebalanced Sub-domain Alignment}




\begin{icmlauthorlist}
\icmlauthor{Yiling Liu}{duke_comb}
\icmlauthor{Juncheng Dong}{duke_ee}
\icmlauthor{Ziyang Jiang}{duke_civil}
\icmlauthor{Ahmed Aloui}{duke_ee}
\icmlauthor{Keyu Li}{duke_ee}
\icmlauthor{Hunter Klein}{duke_ee}
\icmlauthor{Vahid Tarokh}{duke_ee}
\icmlauthor{David Carlson}{duke_civil,duke_biostats}

\end{icmlauthorlist}

\icmlaffiliation{duke_comb}{Program in Computational Biology and Bioinformatics, Duke University School of Medicine, Durham NC, USA}
\icmlaffiliation{duke_civil}{Department of Civil and Environmental Engineering, Duke University, Durham, NC, USA}
\icmlaffiliation{duke_biostats}{Department of Biostatistics and Bioinformatics, Duke University, Durham, NC, USA}
\icmlaffiliation{duke_ee}{Department of Electrical and Computer Engineering, Duke University, Durham, NC, USA}

\icmlcorrespondingauthor{David Carlson}{david.carlson@duke.edu}

\icmlkeywords{Machine Learning, ICML}

\vskip 0.3in
]



\printAffiliationsAndNotice{} 

\begin{abstract}
Unsupervised domain adaptation (UDA) is a technique used to transfer knowledge from a labeled source domain to a different but related unlabeled target domain. While many UDA methods have shown success in the past, they often assume that the source and target domains must have identical class label distributions, which can limit their effectiveness in real-world scenarios. To address this limitation, we propose a novel generalization bound that reweights source classification error by aligning source and target sub-domains. We prove that our proposed generalization bound is at least as strong as existing bounds under realistic assumptions, and we empirically show that it is much stronger on real-world data. We then propose an algorithm to minimize this novel generalization bound. We demonstrate by numerical experiments that this approach improves performance in shifted class distribution scenarios compared to state-of-the-art methods.


\end{abstract}

\section{Introduction}
Supervised deep learning has achieved unprecedented success in a wide range of real-world applications \cite{ganin2015unsupervised}. However, obtaining labeled data, which is crucial for supervised learning algorithms, may be costly, labor-intensive, and/or time-consuming in certain applications, particularly in medical and biological domains \cite{lu2017deep,li2020transformation}. To address this problem, unsupervised domain adaptation (UDA) has been proposed to make use of available labeled data \cite{farahani2021brief}.  The goal of UDA is to transfer knowledge from a labeled source domain to a different but related unlabeled target domain. Efficient UDA is challenging as models trained on the source domain may not perform well on the target domain due  to discrepancies in the distributions of the two domains hereafter referred to as domain shift \cite{wang2018deep,sankaranarayanan2018learning,deng2019cluster}.

To address the challenge of domain shift, most domain adaptation research has focused on reducing the distributional gap between the source and target domains \cite{shen2018wasserstein,liu2016coupling,isola2017image,tzeng2015simultaneous,tzeng2017adversarial,tzeng2020adapting,ganin2015unsupervised,ganin2016domain,peng2018zero}. These methods are supported by statistical learning theory for transfer learning \cite{ben2006analysis,ben2010theory, mansour2012multiple,redko2017theoretical,li2018extracting}. They also have achieved high performance by learning representations that are both discriminative and domain-invariant \cite{ganin2015unsupervised}. However, these methods are somewhat limited in the sense that they only focus on matching the marginal distribution between domains while ignoring label distributions \cite{deng2019cluster}. This limitation may be significant since class distribution shifts across domains are common phenomena in real-world applications \cite{jiang2020implicit,japkowicz2002class,chawla2009data,tan2019generalized}.

In this paper, we present \emph{Domain Adaptation via Rebalanced Sub-domain Alignment} (DARSA), a novel UDA algorithm that addresses the class distribution shifting. Motivated by theoretical analysis of sub-domain-based UDA methods, DARSA provides state-of-the-art (SOTA) performance in various tasks. Specifically, we attempt to reduce the reweighted classification error and the reweighted discrepancy between the sub-domains of the source task and that of the target task in order to improve classification performance under class distribution shifts. The reweighting is based on a simple intuition: \emph{important sub-domains in the target domain need more attention} (see Section~\ref{sec:weighted_classification_loss} for details). By reweighting the sub-domains, we can significantly alleviate the shifted class distribution problem. This is demonstrated by the superior performance of DARSA compared to those of existing SOTA methods (see Sections~\ref{sec:digit} and ~\ref{sec:TST}). 

The benefits of introducing sub-domain alignment can be visualized in one-dimensional space, as shown in Figure \ref{figure_motiv}(a). After splitting the distributions into corresponding sub-domains (e.g. sub-domains with the same label in the source and target task), we can focus on decreasing the distance between the corresponding source and target sub-domains. We also demonstrate the effectiveness of this method in high-dimensional and complex domain adaptation tasks through an experiment (see Section \ref{sec:digit}) where we transfer information from the MNIST \cite{lecun1998gradient} dataset to the MNIST-M dataset \cite{ganin2016domain} under scenarios with shifted class distribution (e.g. different proportions of labels in the source and target domains). As illustrated in Figure \ref{figure_motiv}(b), comparing with the distance between the source and target domains without aligning sub-domains, our method leads to a smaller distance. 

\begin{figure}[t!]
\begin{center}
\subfigure[One-dimensional space]{\includegraphics[width=0.99\columnwidth]{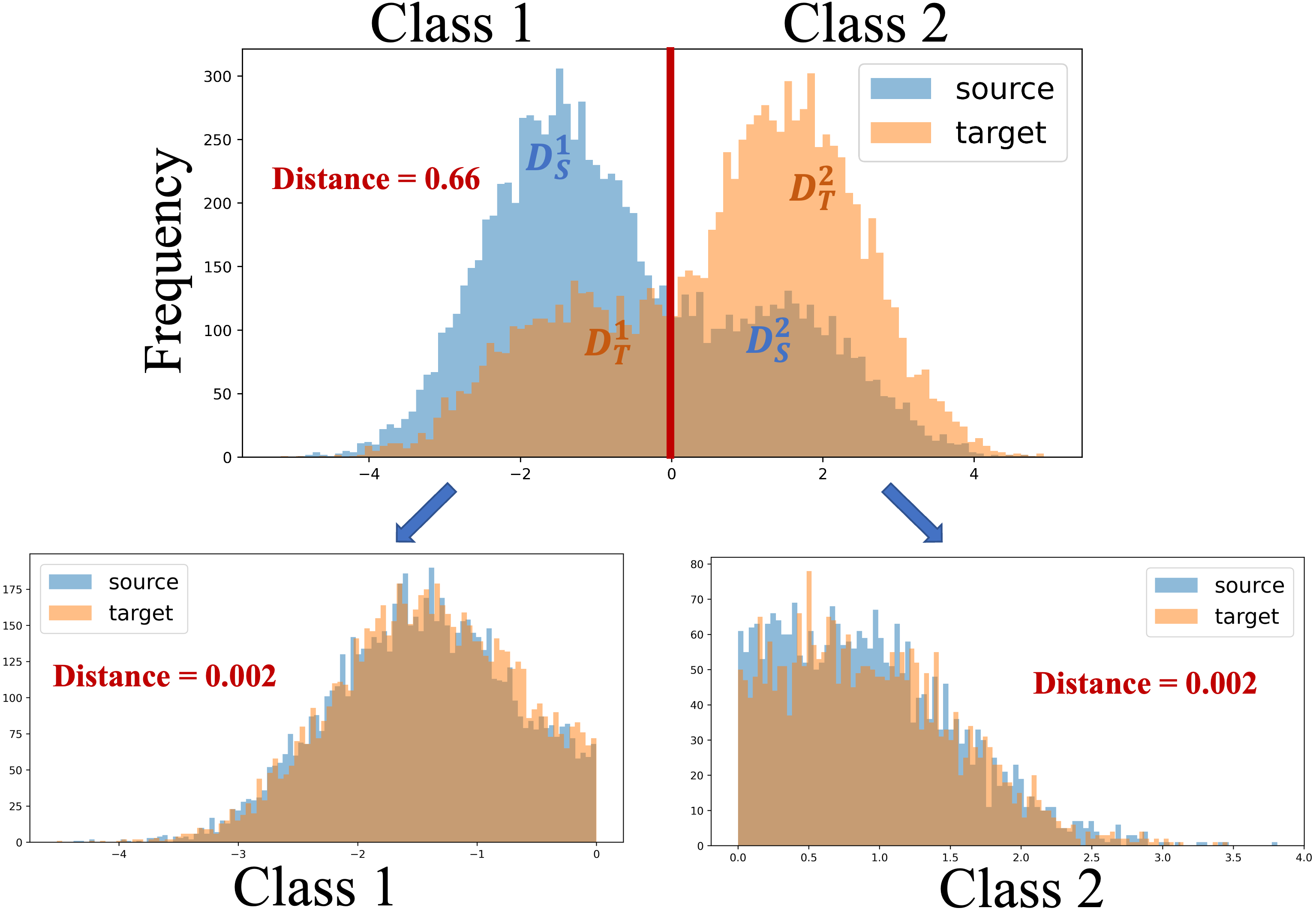}}
\subfigure[High-dimensional space]{\includegraphics[width=0.5\columnwidth]{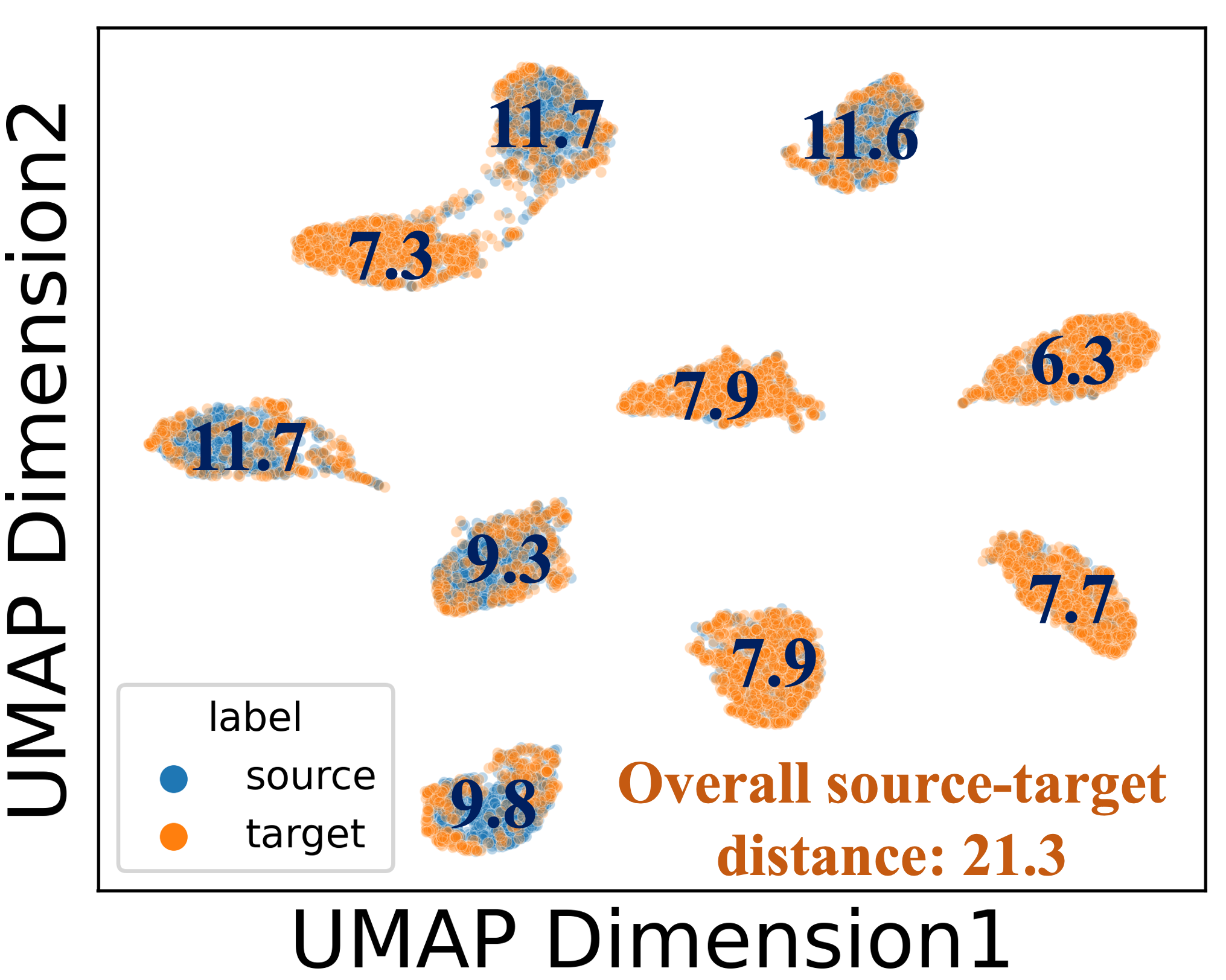}}
\caption{Illustration of the motivation. (a): The source domain (blue), $\mathcal{D_S}$, is sampled from a mixture of two Gaussians centered at $-1.5$ and $1.5$, respectively with weights $0.7$ and $0.3$. The target domain (orange), $\mathcal{D_T}$ is sampled from a mixture of two Gaussian distributions centered at $-1.4$ and $1.6$, respectively with weights $0.3$ and $0.7$. The auxiliary variable Class $\in {1,2}$ divides both domains into $\mathcal{D}^1$ (x $<$ 0) and $\mathcal{D}^2$ (x $\geq$ 0). (b) Distance between corresponding source-target sub-domains is shown for each cluster. Different clusters contain samples of different class labels. The distances are measured using Wasserstein-1 (W1) distance.}
\label{figure_motiv}
\end{center}
\vskip -0.2in
\end{figure}

We summarize our main contributions in this work below:
\begin{itemize}[noitemsep,topsep=0pt]
    \item We theoretically analyze UDA methods that align corresponding source-target sub-domains and establish a new generalization bound.
    \item We prove that our generalization bound is at least as strong as the state-of-the-art bounds for UDA problems with shifted class distribution under realistic assumptions. We also empirically demonstrate that our bound is much stronger on real-world datasets.
    \item Based on the proposed theory, we design an algorithm (DARSA) that reduces rebalanced distribution gaps between the source and target domains and rebalanced classification error by incorporating sub-domain structure.
    \item We demonstrate that DARSA outperforms state-of-the-art methods in shifted class distribution scenarios through experiments on both well-established benchmarks and real-world domain adaptation tasks.
\end{itemize}

\section{Related Work}


\textbf{Discrepancy-based Domain Adaptation.} A common goal of UDA is to reduce the distribution gap between the source and target domains. One approach to achieve this is discrepancy-based methods \cite{tzeng2014deep,long2015learning,sun2016return}, which often use maximum mean discrepancy (MMD) \cite{borgwardt2006integrating} to directly match the marginal distributions of the source and that of the target domains. While MMD is a well-known Reproducing Kernel Hilbert Space (RKHS) metric, it is weaker than the Wasserstein-1 distance \cite{lu2020universal}. Therefore, inspired by WDGRL \cite{shen2018wasserstein}, our work uses the Wasserstein-1 distance as the distance metric. Many discrepancy-based methods also enforce the sharing of the first few layers of the networks between the source and target domains \cite{hassanpour2022survey}. In contrast, our method lifts this restriction by allowing a more flexible feature space. 

\textbf{Adversarial-based Domain Adaptation.} Adversarial-based domain adaptation methods aim to encourage domain similarity through adversarial learning \cite{shen2018wasserstein,liu2016coupling,isola2017image,tzeng2015simultaneous,tzeng2017adversarial,tzeng2020adapting,ganin2015unsupervised,ganin2016domain,peng2018zero,hoffman2018cycada}. These methods are divided into generative methods, which combine discriminative models with a generating process, and non-generative methods, which use a domain confusion loss to learn domain-invariant discriminative features \cite{wang2018deep}. However, many existing algorithms fail to align multi-modal distributions under label shifting scenarios. Additionally, training adversarial networks can be challenging due to mode collapse and oscillations \cite{liang2018generative}.

\textbf{Class-conditional Domain Adaptation.} Class-conditional domain adaptation has been used in a few existing methods to encourage alignment of multi-modal distributions and has shown improved performance in many tasks \cite{deng2019cluster,shi2012information,jiang2020implicit,long2018conditional,snell2017prototypical,pinheiro2018unsupervised}. In contrast to our work, none of these methods provides a theoretical perspective on the benefit of incorporating class-conditional structures. 


\textbf{Theoretical Analysis of Domain Adaptation.} Many existing domain adaptation methods are inspired by generalization bounds based on the $\mathcal{H}$-divergence \cite{ben2006analysis}. The $\mathcal{H}$-divergence
\cite{ben2006analysis} is a modified version of the total variation distance ($L_1$) that restricts the hypothesis to a given class. These generalization bounds can be estimated by learning a domain classifier with a finite Vapnik–Chervonenkis (VC) dimension. However, this results in a loose bound for most neural networks \cite{li2018extracting}. In our method, we use the Wasserstein distance for two reasons. First, the Wasserstein-1 distance is bounded  above by the total variation distance employed by \citet{ben2010theory}. Additionally, the Wasserstein-1 distance is bounded above by the Kullback-Leibler divergence (a special case of the R\'{e}nyi divergence when $\alpha$ goes to 1 \cite{fournier2015rate}), giving stronger bounds than those presented by \citet{redko2017theoretical,mansour2012multiple}. The second reason for leveraging the Wasserstein distance is that it has stable gradients even when the compared distributions are far from each other \cite{gulrajani2017improved}.

\section{Preliminaries}
Given a labeled source domain $X_S$ with distribution $P_S$ and an unlabeled target domain $X_T$ with distribution $P_T$, our goal is to learn a classifier that can accurately predict labels for the target domain using only the labeled source data and the unlabeled target data. Specifically, we have a labeled source dataset $\{(x_S^i,y_S^i)\}_{i=1}^{N_S}$ and an unlabeled target dataset $\{x_T^i\}_{i=1}^{N_T}$. The source dataset has $N_S$ labeled samples, and the target dataset has $N_T$ unlabeled samples. We assume that the samples $x_S^i \in \mathcal{X} \subseteq \mathbb{R}^d$ and $x_T^i \in \mathcal{X} \subseteq \mathbb{R}^d$ are independently and identically drawn from the probability densities $P_S$ and $P_T$, respectively. For mathematical rigorousness, we further assume that $P_S$ and $P_T$ are probability densities of Borel probability measures in Wasserstein space $\mathcal{P}_1 (\mathbb{R}^d)$, which is the space of probability measures with finite first moment. Our goal is to learn a classifier $f(x)$ that can accurately predict the labels $y_T^i$, given only the labeled source dataset $\{(x_S^i,y_S^i)\}_{i=1}^{N_S}$ and the unlabeled target dataset $\{x_T^i\}_{i=1}^{N_T}$.

\begin{itemize}[topsep=0pt]
    \item \textit{Sub-domain-related notations:} We assume that both $X_S$ and $X_T$ are mixtures of $K$ sub-domains. In other words, we have $P_S=\sum_{k=1}^K w_S^k P_S^k$ and $P_T=\sum_{k=1}^K w_T^k P_T^k$ where we use $P_S^k$ and $P_T^k$ to represent the distribution of the $k$-th subdomain of the source domain and that of the target domain respectively. Note that $\mathbf{w_S}\doteq[w_S^1,\dots,w_S^K]$ and $\mathbf{w_T}\doteq[w_T^1,\dots,w_T^K]$ belong to $\Delta^{K}$ (the $K-1$ probability simplex).
     
    \item \textit{Probabilistic Classifier Discrepancy:} For a distribution $\mathcal{D}$, we define the discrepancy between two functions $f$ and $g$ as:
    \begin{equation*}
    \gamma_\mathcal{D}(f,g) = \mathbb{E}_{x\sim\mathcal{D}}\left[|f(x) - g(x)|\right]
    \end{equation*}
    We use $g_T$ and $g_S$ to represent the true labeling functions of the target and source domains, respectively. We use $\gamma_S(f)\doteq\gamma_{P_S}(f,g_S)$ and $\gamma_T(f)\doteq\gamma_{P_T}(f,g_T)$ to respectively denote the discrepancies of a hypothesis $f$ to the true labeling function for the source and target domains.

\item \textit{Wasserstein Distance:}
The Kantorovich-Rubenstein dual representation of the Wasserstein-1 distance between $P_S$ and $P_T$ can be written as \cite{villani2009optimal}:
\begin{equation*}
W_1(P_S,P_T) = \sup_{||f||_L \leq 1} \mathbb{E}_{P_S}[f(x)] - \mathbb{E}_{P_T}[f(x)]
\end{equation*}
where the supremum is taken over all Lipschitz functions $f$ with Lipschitz constant $L \le 1$ (referred to as the set of $1$-Lipschitz functions). 
\end{itemize}

For notational simplicity, we use $D(X_1,X_2)$ to denote the distance between the distributions of any pair of random variables $X_1$ and $X_2$. For instance, $W_1(\Phi(X_S),\Phi(X_T))$ denotes the Wasserstein-1 distance between the distributions of the random variables $\Phi(X_S)$ and $\Phi(X_T)$ for
any transformation $\Phi$.

\section{Theory} 

\label{sec:31}
The importance of the proposed generalization bounds in this section is two-fold: (1) they are theoretically proved to be at least as strong as existing popular upper bounds used in the literature (see Theorem~\ref{theorem:tigher}) and are frequently much stronger on real-world data; (2) they inspire a new framework (see Section~\ref{sec:method}) that achieves state-of-the-art performance on both UDA benchmarks and real-world domain adaptation tasks (see Section~\ref{sec:results}).



\subsection{Generalization Bounds for Domain Adaptation}
Before presenting our theoretical results about sub-domain-based domain adaptation, we first present an upper bound closely related to the work of \citet{ben2010theory} and \citet{li2018extracting}. It is worth noting that we use the Wasserstein-1 distance in our analysis, as it provides a stronger bound than the total variation distance \cite{redko2017theoretical} employed by \cite{ben2010theory}[Theorem 1].

\begin{theorem}[Overall Generalization Bound]
\label{theorem1}
For a hypothesis $f: \mathcal{X} \rightarrow [0,1]$,
\begin{equation}
\gamma_T(f) \leq \gamma_S(f) + (\lambda+\lambda_H) W_1 (P_S, P_T) + \gamma^{\star} 
\end{equation}
where
$\gamma^{\star} = \underset{f \in \mathbb{H}}{\min} \gamma_S(f) + \gamma_T(f)$, $\mathbb{H}$ is a hypothesis class included in the set of $\lambda_H$-Lipschitz functions, and the true functions $g_T$ and $g_S$ are both $\lambda$-Lipschitz functions (as defined in Appendix \ref{def:lipschitz}).
\end{theorem}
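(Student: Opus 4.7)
The plan is to chain the triangle inequality for the $L^1$-type discrepancy $\gamma$ with the Kantorovich--Rubinstein dual representation of $W_1$ recalled in the Preliminaries. The elementary fact driving everything is that for any $L_1$-Lipschitz $h_1$ and $L_2$-Lipschitz $h_2$, the map $|h_1 - h_2|$ is $(L_1+L_2)$-Lipschitz; hence every time Wasserstein duality is applied, the coefficient on $W_1(P_S,P_T)$ is the sum of the two Lipschitz constants of the integrand. This is what will eventually deliver the $(\lambda+\lambda_H)$ factor.

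First, I would introduce the pivot $f^{\star}\in\mathbb{H}$ attaining the minimum in the definition of $\gamma^{\star}$, so that $\gamma_S(f^{\star})+\gamma_T(f^{\star})=\gamma^{\star}$. Using $g_S$ (rather than $f^{\star}$) as the first pivot I split
\begin{equation*}
\gamma_T(f) \;=\; \mathbb{E}_{P_T}|f-g_T| \;\leq\; \mathbb{E}_{P_T}|f-g_S| \;+\; \mathbb{E}_{P_T}|g_S-g_T|.
\end{equation*}
The reason for routing through $g_S$ is that $|f-g_S|$ is exactly $(\lambda+\lambda_H)$-Lipschitz, which is the coefficient the statement demands. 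Applying Kantorovich--Rubinstein to $|f-g_S|$ transfers the first expectation from $P_T$ to $P_S$:
\begin{equation*}
\mathbb{E}_{P_T}|f-g_S| \;\leq\; \gamma_S(f) \;+\; (\lambda+\lambda_H)\,W_1(P_S,P_T).
\end{equation*}

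For the residual $\mathbb{E}_{P_T}|g_S-g_T|$ I would invoke the definition of $\gamma^{\star}$: for any $h\in\mathbb{H}$, a second triangle inequality gives $|g_S-g_T|\leq|g_S-h|+|h-g_T|$, which upon taking expectation under $P_T$ and pushing the $|g_S-h|$ term back to $P_S$ via Kantorovich--Rubinstein produces a bound of the form $\gamma_S(h)+\gamma_T(h)$ plus Wasserstein corrections. Minimising over $h\in\mathbb{H}$ collapses $\gamma_S(h)+\gamma_T(h)$ into $\gamma^{\star}$, and combining with the first piece yields the asserted bound.

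The main obstacle is the sharpness of the coefficient on $W_1(P_S,P_T)$. A naive pivot through $f^{\star}$ gives $2\lambda_H$ (since both arguments of $|f-f^{\star}|$ lie in $\mathbb{H}$), and a naive pivot through $g_T$ gives $\lambda+\lambda_H$ but leaves a labelling-function gap that reintroduces $W_1$ when absorbed into $\gamma^{\star}$. Arranging the pivots so that Kantorovich--Rubinstein is invoked with Lipschitz constant exactly $\lambda+\lambda_H$, and so that the residual $\mathbb{E}_{P_T}|g_S-g_T|$ is bounded by $\gamma^{\star}$ without generating an extra $W_1$ contribution, is where the bookkeeping concentrates; the remaining manipulations are routine applications of the triangle inequality and the definition of $\gamma^{\star}$.
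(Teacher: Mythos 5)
Your skeleton is essentially the paper's: a triangle-inequality decomposition arranged so that Kantorovich--Rubinstein duality is applied to an integrand of the form $|f-g|$ with one argument in $\mathbb{H}$ and one a true labelling function, which is precisely what yields the $(\lambda+\lambda_H)$ coefficient instead of the $2\lambda_H$ you would get from pivoting through $f^{\star}$. The paper pivots the mirror-image way (it keeps $\gamma_S(f,g_S)$, transfers $|\gamma_T(f,g_T)-\gamma_S(f,g_T)|$ with integrand $|f-g_T|$ from $P_S$ to $P_T$, and leaves a residual $\mathbb{E}_{P_S}|g_S-g_T|$), but your split through $g_S$ under $P_T$ is the same idea, and your first step, $\mathbb{E}_{P_T}|f-g_S|\le\gamma_S(f)+(\lambda+\lambda_H)\,W_1(P_S,P_T)$, is correct and matches the paper's duality step.

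The gap is the residual, and it is a real one. You flag that bounding $\mathbb{E}_{P_T}|g_S-g_T|$ by $\gamma^{\star}$ ``without generating an extra $W_1$ contribution'' is where the difficulty concentrates, but the mechanism you then sketch does generate one: writing $|g_S-g_T|\le|g_S-h|+|h-g_T|$ and pushing $\mathbb{E}_{P_T}|g_S-h|$ back to $P_S$ costs a second $(\lambda+\lambda_H)\,W_1(P_S,P_T)$, so after minimising over $h$ your final bound carries coefficient $2(\lambda+\lambda_H)$ on the Wasserstein term, which is strictly weaker than the statement. The step cannot be made ``routine'': the inequality $\mathbb{E}_{P_T}|g_S-g_T|\le\gamma^{\star}$ is false in general (take $P_S,P_T$ with disjoint supports and $g_S,g_T$ agreeing on the support of $P_S$ but not on that of $P_T$; a single $h\in\mathbb{H}$ can then make both $\gamma_S(h)$ and $\gamma_T(h)$ vanish while the residual stays bounded away from zero), so some Wasserstein contribution must absorb part of it. To be fair, the paper's own writeup does not close this self-containedly either --- it bounds its residual by $\mathbb{E}_{X_S}[|g_S-g_T|]$ and then defers to Theorems 1 and 2 of Ben-David et al.\ (2010), whose ideal-joint-hypothesis pivot is what converts that quantity into $\gamma^{\star}$. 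To complete your proof you must either carry out that argument explicitly (and track which Lipschitz constant each application of duality actually incurs), or weaken the additive constant to $\min\{\mathbb{E}_{P_S}|g_S-g_T|,\ \mathbb{E}_{P_T}|g_S-g_T|\}$ in place of $\gamma^{\star}$.
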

\begin{proof}
See in Appendix~\ref{appendix1}.
\end{proof}

\begin{remark}
The upper bound in Theorem~\ref{theorem1} consists of three components: (i) $\gamma_S(f)$ is the performance of the hypothesis on the source domain, (ii) $W_1 (P_S, P_T)$ is the distance between the source and the target domains, and (iii) $\gamma^{\star}$ is a constant that is related to the fundamental difference between the source and the target problems. 
\end{remark}

\begin{remark}
\label{remark:smooth}
For succinctness and clarity of the following analysis, we assume without loss of generality that $\lambda+\lambda_H\leq1$, simplifying the bound to
\begin{equation}
\gamma_T(f) \leq \gamma_S(f) + W_1 (P_S, P_T) + \gamma^{\star} 
\end{equation}

\end{remark}

Numerous works attempt to solve the transfer learning problem by designing algorithms that attempt to minimize similar generalization bounds (e.g., Theorem 1 of \citet{ben2010theory}).  This approach first requires (1) selecting a mapping $\Phi: \mathcal{X} \rightarrow \mathcal{H}$ to transform the original problem by mapping $X_S$ and $X_T$ into a shared hidden space $\mathcal{H}$, and (2) a hypothesis $h:\mathcal{H}\rightarrow[0,1]$ for prediction. Since $\gamma_T(h\circ\Phi) = \gamma_{\Phi(X_T)}(h)$, with Theorem~\ref{theorem1}, we can have a generalization bound for the performance of the function $h\circ\Phi: \mathcal{X} \rightarrow [0,1]$ on the original target problem: 
\begin{multline}
\gamma_T(h\circ\Phi)=\gamma_{\Phi(X_T)}(h)\\ \leq \gamma_{\Phi(X_S)}(h) + W_1 (\Phi(X_S), \Phi(X_T)) + \gamma^{\star}_{\Phi}.
\end{multline}
If the distance between $\Phi(X_S)$ and $\Phi(X_T)$, i.e., $W_1 (\Phi(X_S), \Phi(X_T))$, is close and the classification error of $h$ on the transformed source problem, i.e., $\gamma_{\Phi(X_S)}(h)$, remains low, then the performance of the hypothesis $h\circ\Phi$ on the \emph{original} target problem can be guaranteed. 

This motivation has led to numerous domain adaptation frameworks that optimize the following objective 
\begin{equation}
    \min_{\substack{\Phi:\mathcal{X}\rightarrow\mathcal{H}\\ h:\mathcal{H}\rightarrow[0,1]}} \gamma_{\Phi(X_S)}(h) + \alpha \; D(\Phi(X_S),\Phi(X_T))
\end{equation}
where $\gamma_{\Phi(X_S)}(h)$ is the classification error of $h$ on the transformed source problem, $D$ is some distance between distributions and $\alpha$ is the balancing weight. In this work, we consider the case where $D$ is Wasserstein-1 distance.

\subsection{Sub-domain-based Generalization Bounds for Domain Adaptation with Shifted Class Distribution}

When taking the sub-domain information into consideration, we can have a stronger bound than the one in Theorem~\ref{theorem1}. We first present several results that will be used to build toward the main theorem. These results themselves may be of interest. First of all, for each subdomain, Theorem~\ref{theorem1} directly leads to the following Proposition:
\begin{proposition}[Sub-domain Generalization Bound]
For $k \in \{1,\ldots,K\}$, where K represents the number of distinct subdomains, for sub-domain $X^k_S$ with distribution $P_S^k$ and $X_T^k$ with distribution $P_T^k$, it holds any $f \in \mathbb{H}$ that  
\label{proposition1}
\begin{equation}
\gamma_{T}^k(f) \leq \gamma_S^k(f) + W_1(P_S^k,P_T^k) + (\gamma^{k})^{\star}
\end{equation}
where $(\gamma^{k})^{\star} = \min_{f \in \mathbb{H}} \gamma_S^k (f) + \gamma_T^k (f)$ is the minimum error can be reached,  $\mathbb{H}$ is a hypothesis class included in the set of $\lambda_H$-Lipschitz functions, the true functions $g_T$ and $g_S$ are both $\lambda$-Lipschitz functions, and $\lambda+\lambda_H\leq1$.
\end{proposition}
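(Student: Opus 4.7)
The plan is to apply Theorem~\ref{theorem1} directly to each pair of sub-domains $(P_S^k, P_T^k)$. Since the conclusion of Proposition~\ref{proposition1} has exactly the form of Theorem~\ref{theorem1} but with the domain distributions replaced by their sub-domain components, the substantive work reduces to verifying that all hypotheses required by Theorem~\ref{theorem1} are inherited by the sub-domain problem.

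First I would check that $P_S^k$ and $P_T^k$ are valid Wasserstein-space distributions, i.e., Borel probability measures with finite first moment. Because $P_S = \sum_k w_S^k P_S^k$ and $P_T = \sum_k w_T^k P_T^k$ with $\mathbf{w_S}, \mathbf{w_T} \in \Delta^K$, and $P_S, P_T \in \mathcal{P}_1(\mathbb{R}^d)$, each component of a mixture with strictly positive weight inherits finite first moment (the first moment of the mixture is the convex combination of the components' first moments). Hence $W_1(P_S^k, P_T^k)$ is well-defined. Next I would observe that the Lipschitz assumptions transfer trivially: the true labeling functions $g_S$ and $g_T$ are $\lambda$-Lipschitz on the ambient space $\mathcal{X}$, so they remain $\lambda$-Lipschitz when viewed as labeling functions for the sub-domains $P_S^k$ and $P_T^k$; similarly, every $f \in \mathbb{H}$ is still $\lambda_H$-Lipschitz, and the assumption $\lambda + \lambda_H \le 1$ is preserved.

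With these observations in place, Theorem~\ref{theorem1} applies verbatim to the pair $(P_S^k, P_T^k)$ equipped with true labeling functions $(g_S, g_T)$ and hypothesis class $\mathbb{H}$. Instantiating the bound with $P_S \leftarrow P_S^k$, $P_T \leftarrow P_T^k$, and noting that $\gamma_S(f)$, $\gamma_T(f)$, $\gamma^\star$ become $\gamma_S^k(f)$, $\gamma_T^k(f)$, and $(\gamma^k)^\star = \min_{f \in \mathbb{H}} \gamma_S^k(f) + \gamma_T^k(f)$ by definition, yields exactly the claimed inequality
\begin{equation*}
\gamma_T^k(f) \leq \gamma_S^k(f) + W_1(P_S^k, P_T^k) + (\gamma^k)^\star.
\end{equation*}

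There is no real obstacle here: the proposition is essentially a corollary of Theorem~\ref{theorem1}, and the only care needed is to confirm that restricting attention to a single mixture component does not violate any of the regularity assumptions. The mild subtlety worth spelling out in a careful writeup is that the hypothesis class $\mathbb{H}$ and the Lipschitz constants $\lambda, \lambda_H$ are chosen globally (independent of $k$), so the same $\mathbb{H}$ serves all $K$ sub-domain bounds simultaneously; this uniformity will be essential later when the sub-domain bounds are recombined into a global bound via the mixture weights.
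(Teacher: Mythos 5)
Your proposal is correct and follows exactly the paper's route: Proposition~\ref{proposition1} is obtained by instantiating Theorem~\ref{theorem1} on each sub-domain pair $(P_S^k, P_T^k)$, which is precisely how the paper presents it (``Theorem~\ref{theorem1} directly leads to the following Proposition''). Your additional checks---that the mixture components inherit finite first moments and that the Lipschitz assumptions on $g_S$, $g_T$, and $\mathbb{H}$ carry over unchanged---are sound and only make explicit what the paper leaves implicit.
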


Our second result shows that the classification error of any hypothesis $f$ on a domain can be represented by a weighted sum of the classification errors of $f$ on its sub-domains.

\begin{lemma}[Decomposition of the Classification Error] 
For any hypothesis $f\in\mathbb{H}$,
\label{lemma1}
\begin{equation}
\begin{split}
\textstyle\gamma_S(f) =\sum_{k=1}^K w_S^k  \gamma_{S}^k(f), \\
\textstyle\gamma_T(f)  = \sum_{k=1}^K w_T^k  \gamma_{T}^k(f).
\end{split}
\end{equation}
\end{lemma}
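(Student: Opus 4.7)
The plan is to observe that Lemma~\ref{lemma1} is a direct consequence of two facts: (i) the assumed mixture decompositions $P_S=\sum_{k=1}^K w_S^k P_S^k$ and $P_T=\sum_{k=1}^K w_T^k P_T^k$, and (ii) the linearity of the expectation (equivalently, of the Lebesgue integral) that defines the probabilistic classifier discrepancy $\gamma_{\mathcal{D}}(f,g)=\mathbb{E}_{x\sim\mathcal{D}}[|f(x)-g(x)|]$. Since both identities have the same structure, it suffices to prove the source identity; the target identity then follows by replacing every $S$ with a $T$.

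First I would unfold the definition: by construction $\gamma_S(f)=\gamma_{P_S}(f,g_S)=\int |f(x)-g_S(x)|\,dP_S(x)$. Next I would substitute the mixture form of $P_S$, giving
\begin{equation*}
\gamma_S(f)=\int |f(x)-g_S(x)|\,d\!\left(\textstyle\sum_{k=1}^K w_S^k P_S^k\right)\!(x).
\end{equation*}
Since each $w_S^k\ge 0$, $\sum_k w_S^k=1$, and $|f-g_S|$ is nonnegative and measurable, linearity of the integral allows me to interchange the sum and the integral, yielding $\gamma_S(f)=\sum_{k=1}^K w_S^k\int|f(x)-g_S(x)|\,dP_S^k(x)$. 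Finally I would identify the inner integral with $\gamma_{P_S^k}(f,g_S)=\gamma_S^k(f)$ (noting that the true labeling function $g_S$ is shared across all source sub-domains, so the same $g_S$ appears in the definition of $\gamma_S^k$), producing the claimed decomposition.

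The target identity is obtained by repeating the argument verbatim with $P_T=\sum_k w_T^k P_T^k$ and the target labeling function $g_T$. There is no real obstacle in this proof beyond verifying that the notation is consistent — in particular, that the $\gamma_S^k$ and $\gamma_T^k$ appearing here and in Proposition~\ref{proposition1} refer to the discrepancies of $f$ with respect to the \emph{same} $g_S$ and $g_T$ used in $\gamma_S$ and $\gamma_T$ — and that the mixture weights being nonnegative makes the swap of sum and integral a straightforward application of linearity (no appeal to Fubini/Tonelli beyond finite additivity is required).
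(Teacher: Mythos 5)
Your proposal is correct and follows essentially the same route as the paper's proof: both unfold $\gamma_S(f)$ as an integral against the mixture $P_S=\sum_k w_S^k P_S^k$ (the paper writes the components as conditional densities $P_S(x\mid c=k)$), interchange the finite sum with the integral by linearity, and identify each term as $w_S^k\gamma_S^k(f)$, with the target identity proved verbatim. No differences worth noting.
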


With Proposition~\ref{proposition1} and Lemma~\ref{lemma1}, we can have a generalization bound of the target domain with sub-domain information:
\begin{theorem}[Sub-domain-based Generalization Bound]
\label{theorem2}
\begin{multline}
\textstyle    \gamma_T(f,g_T) \leq \sum_{k=1}^K w_T^k \gamma_S^k(f,g_S) \\ 
 \textstyle   + \sum_{k=1}^K w_T^k W_1(P_S^k,P_T^k) + \sum_{k=1}^K w_T^k (\gamma^{k})^{\star}
\end{multline}
\end{theorem}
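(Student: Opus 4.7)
The plan is to obtain the bound by a direct splice of Lemma~\ref{lemma1} and Proposition~\ref{proposition1}. First I would invoke Lemma~\ref{lemma1} on the target side to write
\begin{equation*}
\gamma_T(f) \;=\; \sum_{k=1}^K w_T^k \, \gamma_T^k(f),
\end{equation*}
which is legitimate because $P_T=\sum_k w_T^k P_T^k$ and expectation is linear in the mixture weights. This rewriting is the key move: it converts a single target error into a convex combination of sub-domain errors against which Proposition~\ref{proposition1} can be applied term by term.

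Next I would apply Proposition~\ref{proposition1} to each sub-domain index $k$, which (under the hypothesis class and Lipschitz assumptions already carried over from Theorem~\ref{theorem1} and Remark~\ref{remark:smooth}) yields
\begin{equation*}
\gamma_T^k(f) \;\leq\; \gamma_S^k(f) + W_1(P_S^k,P_T^k) + (\gamma^{k})^{\star}
\end{equation*}
for every $k\in\{1,\dots,K\}$. Because $w_T^k\geq 0$, multiplying each of these $K$ inequalities by $w_T^k$ preserves the inequality direction, and summing over $k$ gives exactly the statement of the theorem. One subtle but important point worth flagging in the write-up is that the source-error term that appears is $\sum_k w_T^k\,\gamma_S^k(f)$, with the \emph{target} mixing weights rather than the source ones; this is precisely the ``rebalanced'' source risk that motivates the DARSA objective and should not be collapsed back to $\gamma_S(f)$ via Lemma~\ref{lemma1}.

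There is essentially no hard step: no new coupling argument, no new Lipschitz bookkeeping beyond what Theorem~\ref{theorem1} and Proposition~\ref{proposition1} already provide. The only thing to be careful about is making the assumptions used by Proposition~\ref{proposition1} (true labelling functions $g_S,g_T$ being $\lambda$-Lipschitz, hypothesis class $\mathbb{H}$ being $\lambda_H$-Lipschitz with $\lambda+\lambda_H\leq 1$) explicit at the top of the proof so that the sub-domain bound is immediately applicable, and to note that the same $\mathbb{H}$ is used in defining each $(\gamma^k)^\star$, so the minimisers may differ across $k$ but live in a common class. With those hypotheses recorded, the proof is a two-line combination of Lemma~\ref{lemma1} and Proposition~\ref{proposition1}.
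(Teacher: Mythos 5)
Your proof is correct and follows exactly the paper's own argument: decompose $\gamma_T(f)=\sum_k w_T^k\gamma_T^k(f)$ via Lemma~\ref{lemma1}, then apply Proposition~\ref{proposition1} to each sub-domain and sum with the nonnegative weights $w_T^k$. Your added remarks about keeping the target weights on the source-error term and carrying the Lipschitz assumptions are consistent with the paper's treatment.
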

\begin{proof}
See in Appendix~\ref{theorem2proof}.
\end{proof}


We next show that, under reasonable assumptions, the weighted sum of distances between corresponding sub-domains of the source and target domains is at most as large as the distance between the marginal distribution of the source domain and that of the target domain.

\begin{theorem}\label{thm:weight_subdomain_distance_stronger}
If the following assumptions hold,
\begin{itemize}
\item For $k \in \{1,\ldots,K\}$, $P^k_S$ / $P^k_T$ are Gaussian distributions with mean $m_S^k$ / $m_T^k$ and covariance $\Sigma_S^k$ / $\Sigma_T^k$.
\item The distance between the paired source-target sub-domain is smaller or equal to the distance between the non-paired source-target sub-domain, i.e., for $k \neq k^\prime$, we have $W_1(P_S^k,P_T^k) \leq W_1(P_S^k,P_T^{k^\prime})$.
\item There exists an assumed small constant $\epsilon >0$, such that $\underset{1\leq k \leq K}{\max}(\text{trace} (\Sigma_S^k)) \leq \epsilon$ and $\underset{1\leq k \leq K}{\max}(\text{trace} (\Sigma_T^{k^\prime})) \leq \epsilon$. 
\end{itemize}
Then 
$$\textstyle \sum_{k=1}^K w_T^k W_1(P_S^k,P_T^k)\leq W_1(P_S,P_T) + \delta_c$$
where $\delta_c$ is $4\sqrt{\epsilon}$. 
\end{theorem}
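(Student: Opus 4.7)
The plan is to leverage the small-covariance assumption to approximate each Gaussian sub-domain by a Dirac at its mean, so that all relevant Wasserstein distances reduce (up to $O(\sqrt\epsilon)$ error) to Euclidean distances between means, and then to apply the pairing assumption to the resulting discrete transport problem.

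First, I would bound the cost of each Gaussian-to-Dirac approximation. For $X \sim \mathcal{N}(m, \Sigma)$, Jensen's inequality gives $W_1(\mathcal{N}(m,\Sigma), \delta_m) = \mathbb{E}\|X - m\| \leq \sqrt{\text{trace}(\Sigma)}$, so $W_1(P_S^k, \delta_{m_S^k}) \leq \sqrt\epsilon$ and analogously for the target sub-domains. Setting $\tilde P_S \doteq \sum_k w_S^k \delta_{m_S^k}$ and $\tilde P_T \doteq \sum_k w_T^k \delta_{m_T^k}$, the standard sub-additivity of $W_1$ under mixtures with shared weights (obtained by gluing the per-component couplings) yields $W_1(P_S, \tilde P_S) \leq \sqrt\epsilon$ and $W_1(P_T, \tilde P_T) \leq \sqrt\epsilon$. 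A triangle-inequality step also gives $W_1(P_S^k, P_T^k) \leq \|m_S^k - m_T^k\| + 2\sqrt\epsilon$ for each $k$.

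The crux is to establish the discrete lower bound $W_1(\tilde P_S, \tilde P_T) \geq \sum_k w_T^k \|m_S^k - m_T^k\|$. For any coupling $\pi$ with marginals $w_S$ and $w_T$, the pairing assumption (transferred to Euclidean distances between the means via the near-Dirac approximation) ensures $\|m_S^j - m_T^k\| \geq \|m_S^k - m_T^k\|$ for all $j$, so $\sum_j \pi_{jk} \|m_S^j - m_T^k\| \geq w_T^k \|m_S^k - m_T^k\|$; summing over $k$ and taking the infimum over $\pi$ gives the claim. Chaining two applications of the triangle inequality then completes the argument: $W_1(P_S, P_T) \geq W_1(\tilde P_S, \tilde P_T) - 2\sqrt\epsilon \geq \sum_k w_T^k \|m_S^k - m_T^k\| - 2\sqrt\epsilon \geq \sum_k w_T^k W_1(P_S^k, P_T^k) - 4\sqrt\epsilon$, yielding $\delta_c = 4\sqrt\epsilon$.

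The main obstacle is the direction of the pairing assumption: as stated it fixes the source index and varies the target, whereas the key step needs the reverse (fix target, vary source). For well-separated concentrated Gaussians the two directions coincide up to $O(\sqrt\epsilon)$ slack, so I would either invoke the assumption in its symmetrized, mutual-nearest-neighbor form on the means, or explicitly transfer the Wasserstein pairing inequality to its Euclidean counterpart and absorb all $\sqrt\epsilon$ terms so that the triangle-inequality slack combines into exactly $4\sqrt\epsilon$.
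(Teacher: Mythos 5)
Your main chain of inequalities is sound and it reaches the stated constant by a more elementary route than the paper. The paper's proof goes through the mixture-Wasserstein distance $MW_1$ of Delon and Desolneux: it first bounds $\sum_k w_T^k W_1(P_S^k,P_T^k)$ by $MW_1(P_S,P_T)$ via the pairing assumption applied inside an arbitrary coupling of $(w_S,w_T)$, and then proves $MW_1(P_S,P_T)\le W_1(P_S,P_T)+4\sqrt{\epsilon}$ by approximating both mixtures with sequences of Dirac mixtures, a lemma comparing $MW_1$ to a ``means-only'' surrogate, Jensen's inequality, and a limiting argument. You collapse that second stage: replacing each component by the Dirac at its mean costs at most $\sqrt{\mathrm{trace}(\Sigma)}\le\sqrt{\epsilon}$, mixture convexity gives $W_1(P_S,\tilde P_S),\,W_1(P_T,\tilde P_T)\le\sqrt{\epsilon}$, and two triangle inequalities plus the discrete coupling bound on $W_1(\tilde P_S,\tilde P_T)$ finish the job with no $MW_1$ and no limits; your coupling step is the same argument the paper runs at the mixture level, since $W_1(\tilde P_S,\tilde P_T)$ is exactly the discrete transport between the means. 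One bookkeeping suggestion: keep the pairing inequality at the $W_1$ level inside the coupling sum, i.e.\ bound $\|m_S^j-m_T^k\|\ge W_1(P_S^j,P_T^k)-2\sqrt{\epsilon}\ge W_1(P_S^k,P_T^k)-2\sqrt{\epsilon}$, which lands exactly on $4\sqrt{\epsilon}$; first converting the assumption into a nearest-neighbor statement about means and then converting back costs an extra $2\sqrt{\epsilon}$.

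The one genuine issue is the direction of the pairing assumption, and your fallback claim --- that for concentrated Gaussians the two directions coincide up to $O(\sqrt{\epsilon})$ --- is false: nearest-neighbor relations are asymmetric no matter how concentrated the components are. With $K=2$, zero covariance, $m_S^1=0$, $m_S^2=10$, $m_T^1=6$, $m_T^2=11$, the source-fixed condition $W_1(P_S^k,P_T^k)\le W_1(P_S^k,P_T^{k'})$ holds ($6\le 11$, $1\le 4$) while the target-fixed one fails ($6>4$); taking $w_S=(\tfrac12,\tfrac12)$, $w_T=(0.9,0.1)$ gives $\sum_k w_T^k W_1(P_S^k,P_T^k)=5.5$ but $W_1(P_S,P_T)=4.7$, so no $O(\sqrt{\epsilon})$ slack can rescue the literal source-fixed reading. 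The step your argument (and, implicitly, the paper's) needs is the target-fixed version --- each target sub-domain is $W_1$-closest to its paired source sub-domain --- which is what the paper tacitly invokes when it identifies $w_T^k$ with the marginal $\sum_{k'}w_{k,k'}$ of the coupling while applying the assumption with the source index held fixed. So take your first proposed fix: state and use the assumption in its symmetric (or target-fixed) form, which is the evident intent of the verbal statement; with that reading your proof is complete, and it is a cleaner derivation of the same bound.
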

\begin{proof}
See in Appendix~\ref{dis_stronger_proof}.
\end{proof}

\begin{remark}
$\delta_c$ is a constant dependent only on the variance of the features but not the covariance between features in different dimensions. Moreover, the inequality still holds without $\delta_c$ on the right-hand side as demonstrated by our numerical experiments. To make our bound stronger, we add an intra-clustering loss $\mathcal{L}_{intra}$ in our learning objectives to minimize the $\delta_c$ term (details provided in Section \ref{sec:clustering_loss}).
\end{remark}

\begin{remark}
The assumption of a Gaussian distribution for $X^k$ is not unreasonable since it is often the result of a complex transformation, $\Phi$, and the Central Limit Theorem indicates that the outcome of such a transformation converges to a Gaussian distribution.
\end{remark}

\label{sec:weighted_classification_loss}
In the setting of shifted class distributions, $w_T^k$ and $w_S^k$ can be vastly different. To overcome this, we propose to minimize an objective with the simple intuition that \emph{important sub-domains in the target domain need more attention}. With this motivation, we propose the following objective function for UDA with shifted class distribution:
\begin{equation}
\textstyle    \mathcal{L}(f) = \sum_{k=1}^{K}w^k_T\gamma_S^k(f).
\end{equation}
In other words, $\mathcal{L}$ reweights the sub-domains losses so that the subdomain with more weight in the target domain can be emphasized more. 

We prove that in UDA with shifted class distributions, the sub-domain-based generalization bound is at least as strong as the general upper bound without sub-domain information as demonstrated by the following Theorem. 

\begin{theorem}\label{theorem:tigher}
Let $\mathbb{H}\doteq\{f | f:\mathcal{X}\rightarrow[0,1]\}$ denote a hypothesis space. Under the Assumptions in Theorem~\ref{thm:weight_subdomain_distance_stronger}, if the following assumption hold for all $f\in\mathbb{H}$:
\begin{equation}\label{eqn:ass_uda}
\textstyle \sum_{k=1}^{K}w^k_T\gamma_S^k(f) \leq \sum_{k=1}^{K}w^k_S\gamma_S^k(f), 
\end{equation}
then we have 
\begin{equation*}
  \textstyle   \sum_{k=1}^K w_T^k (\gamma^{k})^{\star} \leq \gamma^{\star}.
\end{equation*}
Further, let 
\begin{multline*}
 \textstyle    \epsilon_c(f) \doteq \sum_{k=1}^K w_T^k \gamma_S^k(f,g_S) \\+ \textstyle \sum_{k=1}^K w_T^k W_1(P_S^k,P_T^k) + \sum_{k=1}^K w_T^k (\gamma^{k})^{\star}
\end{multline*}
denote the sub-domain-based generalization bound, let 
$$
\epsilon_g(f) \doteq \gamma_S(f,g_S) + W_1 (P_S, P_T) + \gamma^{\star}
$$
denote the general generalization bound without any sub-domain information. We have for all $f$,
\begin{equation*}
    \epsilon_c(f) \leq \epsilon_g(f) + \delta_c
\end{equation*}
\end{theorem}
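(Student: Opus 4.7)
The plan is to split the claim into the two pieces that the statement itself separates: (i) prove the inequality $\sum_{k=1}^K w_T^k (\gamma^k)^\star \le \gamma^\star$ under the reweighting assumption \eqref{eqn:ass_uda}, and (ii) then bound $\epsilon_c(f)$ by $\epsilon_g(f)+\delta_c$ term-by-term, invoking (i), the assumption \eqref{eqn:ass_uda}, and Theorem~\ref{thm:weight_subdomain_distance_stronger}. Lemma~\ref{lemma1} (decomposition of the classification error) is the glue that lets us move between the ``global'' quantities $\gamma_S(f), \gamma_T(f)$ and the sub-domain quantities $\gamma_S^k(f), \gamma_T^k(f)$.

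For (i), I would let $f^\star \in \mathbb{H}$ be a minimizer of $\gamma_S(f)+\gamma_T(f)$, so that $\gamma^\star = \gamma_S(f^\star)+\gamma_T(f^\star)$. Expanding both terms with Lemma~\ref{lemma1} gives $\gamma^\star = \sum_k w_S^k \gamma_S^k(f^\star) + \sum_k w_T^k \gamma_T^k(f^\star)$. Applying the hypothesis \eqref{eqn:ass_uda} to $f^\star$ replaces $\sum_k w_S^k \gamma_S^k(f^\star)$ from below by $\sum_k w_T^k \gamma_S^k(f^\star)$, yielding
\begin{equation*}
\gamma^\star \ge \sum_{k=1}^K w_T^k\bigl(\gamma_S^k(f^\star) + \gamma_T^k(f^\star)\bigr) \ge \sum_{k=1}^K w_T^k (\gamma^k)^\star,
\end{equation*}
where the last inequality follows because each summand is bounded below by the per-sub-domain minimum $(\gamma^k)^\star=\min_{f\in\mathbb{H}} \gamma_S^k(f)+\gamma_T^k(f)$ and the weights $w_T^k$ are nonnegative.

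For (ii), I would write out $\epsilon_c(f)$ and $\epsilon_g(f)$ and compare them summand by summand. The classification-error term satisfies $\sum_k w_T^k \gamma_S^k(f) \le \sum_k w_S^k \gamma_S^k(f) = \gamma_S(f)$ by \eqref{eqn:ass_uda} together with Lemma~\ref{lemma1}. The Wasserstein term satisfies $\sum_k w_T^k W_1(P_S^k,P_T^k) \le W_1(P_S,P_T) + \delta_c$ directly from Theorem~\ref{thm:weight_subdomain_distance_stronger}, and this is the only place where the extra slack $\delta_c=4\sqrt{\epsilon}$ enters. The ideal-hypothesis term is handled by part (i). Summing the three inequalities gives $\epsilon_c(f)\le \epsilon_g(f)+\delta_c$ uniformly in $f$.

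The only nontrivial step is part (i); the rest is bookkeeping. Within (i), the subtle point is that \eqref{eqn:ass_uda} is assumed for every $f\in\mathbb{H}$, and we must apply it at the \emph{particular} minimizer $f^\star$ of $\gamma_S+\gamma_T$ rather than at per-subdomain minimizers --- this is what lets us pass from the joint minimum $\gamma^\star$ to the sum of per-subdomain minima $(\gamma^k)^\star$ without needing any joint assumption that all sub-domain problems share a common optimal hypothesis. I would be careful to note that the existence of $f^\star$ is already used implicitly in the definition of $\gamma^\star$ from Theorem~\ref{theorem1}, so no additional regularity is needed.
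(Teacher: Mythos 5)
Your proposal is correct and follows essentially the same route as the paper: the paper proves part (i) by decomposing $\gamma^\star$ via Lemma~\ref{lemma1}, adding and subtracting $\sum_k w_T^k\gamma_S^k$, dropping the nonnegative term $\sum_k (w_S^k-w_T^k)\gamma_S^k(f)$ guaranteed by Assumption~\eqref{eqn:ass_uda}, and then using $\min_f$ of a sum $\geq$ sum of $\min_f$'s, which is exactly your argument phrased inside the $\min$ operator rather than at an explicit minimizer $f^\star$. Your part (ii), the term-by-term comparison invoking \eqref{eqn:ass_uda} with Lemma~\ref{lemma1} and Theorem~\ref{thm:weight_subdomain_distance_stronger}, is the step the paper leaves implicit, and you fill it in correctly.
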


\begin{proof}
See in Appendix~\ref{theorem:tigher_proof}.
\end{proof}

\begin{remark}
We note that Assumption~\eqref{eqn:ass_uda} is likely to hold, as we are minimizing the left hand side of Eq.~\eqref{eqn:ass_uda}.
\end{remark}

\begin{remark}
In addition to theoretically proving that our proposed bound is at least as strong as the existing bound, we empirically observe that the sub-domain-based bound tends to be much stronger (see Figure \ref{figure_motiv} (b) and quantitative results in Section \ref{sec:empircalbound}).
\end{remark}

Inspired by Theorem~\ref{theorem:tigher}, we next propose a framework for imbalanced UDA with the name \textit{Domain Adaptation with Rebalanced Sub-domain Alignment} (DARSA for short).

\section{Methods}\label{sec:method}
Our proposed method, DARSA, focuses on aligning the distributions of the source and target sub-domains (Figure 2(b)) rather than matching the marginal distributions (Figure 2(a)). In DARSA, we divide the source domains into sub-domains based on class labels, and divide target domains into sub-domains using predicted class labels (serving as pseudo labels which have shown success in previous research \cite{lee2013pseudo,deng2019cluster}) for unlabeled target domains. This encourages homogeneous sub-domains (e.g., sub-domains with the same class labels in a classification task) to merge, while separating heterogeneous sub-domains for better decision boundaries.

\begin{figure}[t]
\vskip 0in
\begin{center}
\subfigure[Baseline model]{\includegraphics[width=0.8\columnwidth]{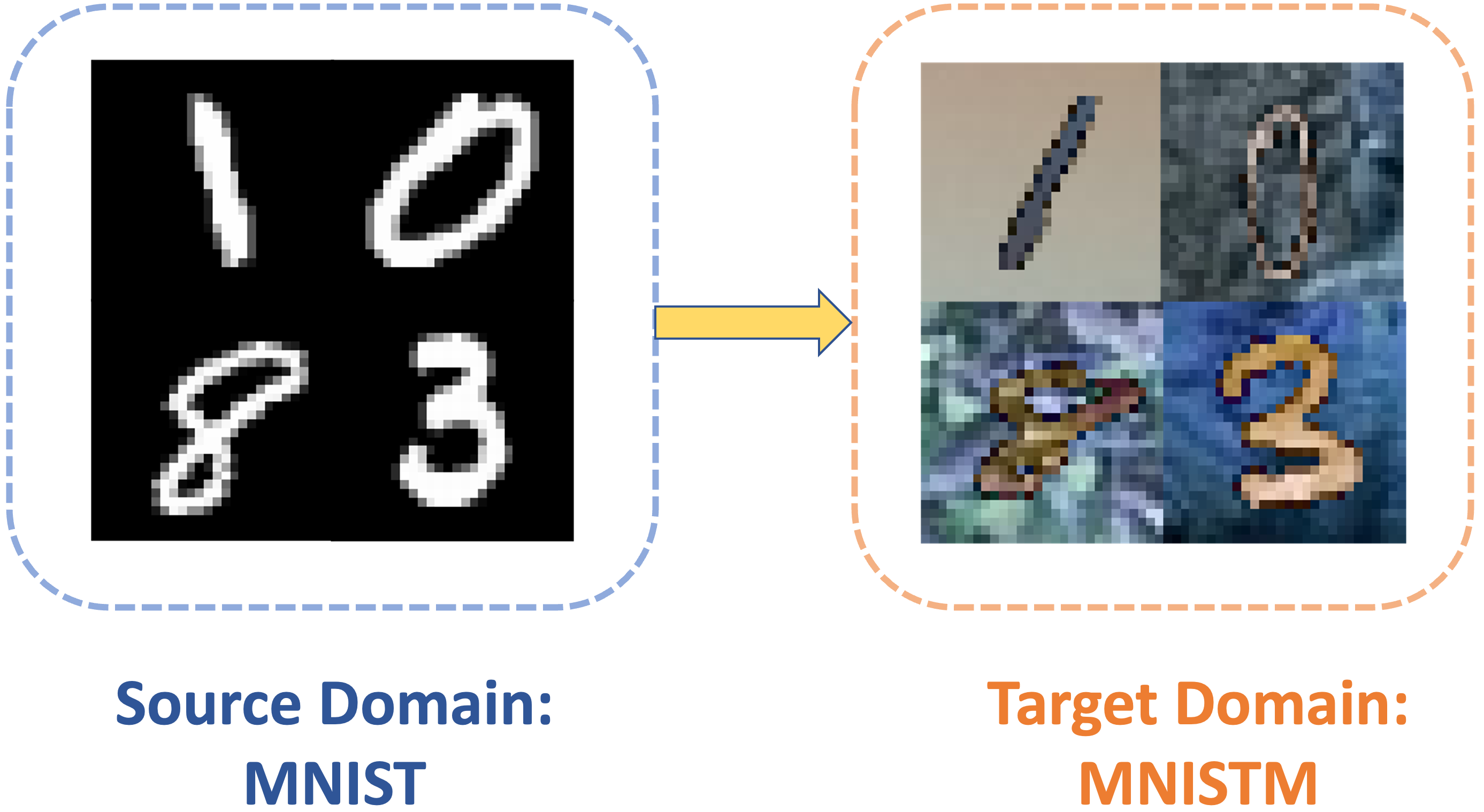}}
\subfigure[Proposed model, DARSA]{\includegraphics[width=0.8\columnwidth]{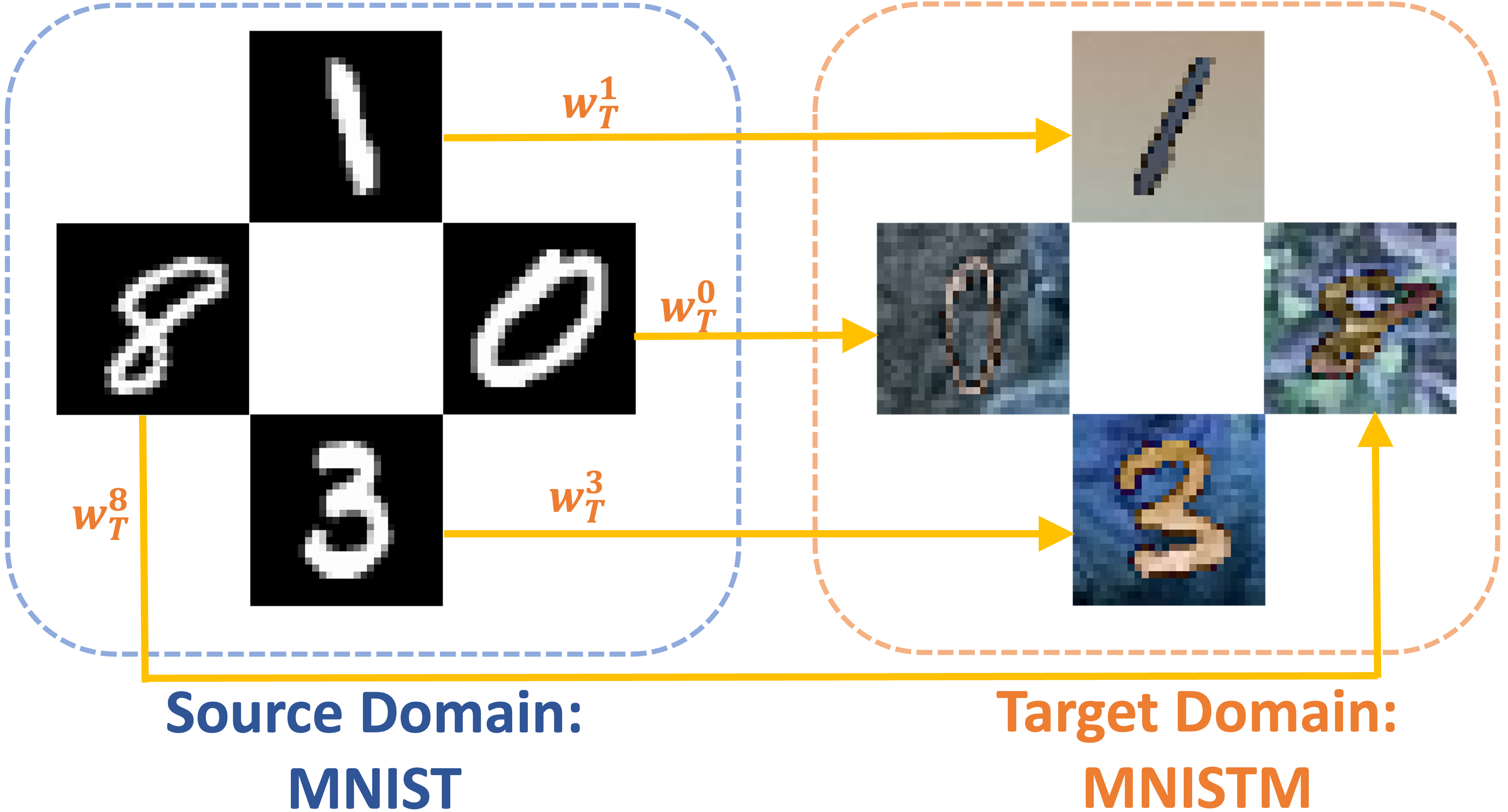}}
\caption{Figure 2(a): classic UDA methods. Figure 2(b): DARSA. $w^k_T$ is the weight (i.e., $\mathbb{P}(y_T=k)$) of the $k$-th subdomain in target domain.}
\label{figure1}
\end{center}
\vskip -0.2in
\end{figure}

Motivated by Theorem \ref{theorem2}, the framework of DARSA, shown in Figure \ref{figure_framework}, is composed of a source encoder $f_E^S$ (parameterized by $\theta_E^S$), a target encoder $f_E^T$ (parameterized by $\theta_E^T$), and a label classifier $f_Y$ (parameterized by $\theta_Y$).  In our implementation, all three functions are defined as neural networks. We first pretrain $f_E^S$ and $f_Y$ with source data, and initialize $f_E^S$ and $f_E^T$ with pretrained weights. The source/target encoder $f_E^S$/$f_E^T$ maps data to feature space, then the classifier $f_Y$ uses extracted features to predict labels. Our framework is outlined in pseudo-code in Appendix \ref{sec:algorithm}. Source labels and predicted target labels are used in the learning objective function for the following purposes:
    ($i$) Aligning the sub-domains of the source and the target;
    ($ii$) Estimating weights $\mathbf{w_S}$ and $\mathbf{w_T}$;
    ($iii$) Calculating weighted classification loss and clustering losses.
We next give a detailed discussion of the learning objective function.

\subsection{Learning Objectives}
\label{lo}
Our learning objectives are motivated by Theorem \ref{theorem2}. Based on Equations \ref{lossy}, \ref{lossd}, \ref{lossc}, and \ref{lossa}, we can represent the learning objectives as follows:
\begin{equation}
\textstyle \min_{\theta_Y,\theta_E^S, \theta_E^T} \lambda_Y \mathcal{L}_Y + \lambda_D \mathcal{L}_D + \mathcal{L}_{C} \\
\end{equation}
where $\mathcal{L}_Y$ represents weighted source domain classification error, $\mathcal{L}_D$ represents weighted source-target domain discrepancy, $\mathcal{L}_{C} = \lambda_c \mathcal{L}_{intra} + \lambda_a \mathcal{L}_{inter}$ represents the clustering loss (details provided in Section \ref{sec:clustering_loss}), and $\lambda_Y$, $\lambda_D$, $\lambda_c$ and $\lambda_a$ are hyperparameters representing weights of each loss. We next elaborate on each one of the losses.

\subsubsection{$\mathcal{L}_Y$(Weighted source domain classification error)}
The weighted source domain classification error in Theorem \ref{theorem2} can be further expressed as:
\begin{equation}
\label{lossy}
\begin{split}
& \textstyle \sum_{k=1}^K w_T^k (\gamma_S^k(f,g_S)) \\
& \textstyle = \sum_{k=1}^K w_T^k\int P_S(x|c=k) |f(x) - g_S(x)|dx \\
& \textstyle = \sum_{k=1}^K w_T^k\int \frac{P_S(c=k|x)P_S(x)}{P_S(c=k)} |f(x) - g_S(x)|dx \\
& \textstyle = \sum_{k=1}^K \frac{w_T^k}{w_S^k}\mathbb{E}_{x\sim D_s} w_S^k(x)|f(x) - g_S(x)| \\
\end{split}
\end{equation}
where variable $c$ represents class, $w_T^k = P_T(c=k),w_S^k = P_S(c=k), w_S^k(x) = P_S(c=k|x)$.  We say that $P_S(c=k|x) = 1$ only when data point $x$ is in class $k$, otherwise $P_S(c=k|x) = 0$. $w_S^k$ can be set to the marginal source label distribution, and $w_T^k$ can be estimated from the target predictions.

From Equation \ref{lossy}, we can express the empirical weighted source domain classification error as:
\begin{equation*}
\begin{split}
\textstyle\mathcal{L}_Y(\theta_Y,\theta_E^S) 
& \textstyle= \frac{1}{N_S} \sum_{x^i \in \mathcal{X}_S} \mathbbm{1}_{y^i = k}\frac{w_T^k}{w_S^k} \ell (\hat{y^i}, y^i)
\end{split}
\end{equation*}
where $\hat{y}^i = f_Y(f_E^S(x^i))$ is the predicted label and $\ell$ can be any non-negative loss function (e.g., cross-entropy loss for classification tasks). 

\begin{figure}[t]
\begin{center}
\centerline{\includegraphics[width=\columnwidth]{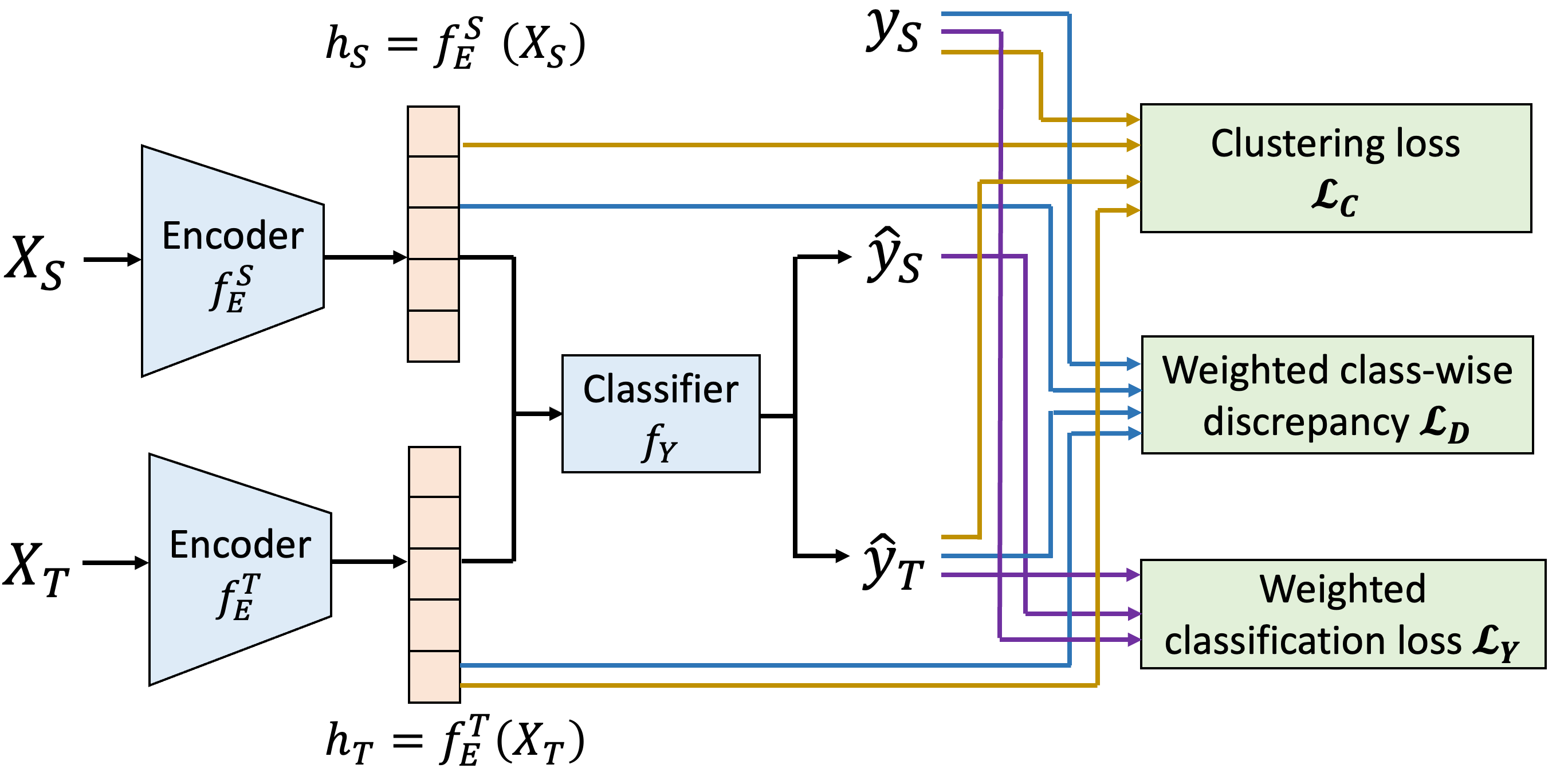}}
\caption{The DARSA framework. Yellow lines representing the clustering loss $\mathcal{L}_C$, blue lines indicating domain discrepancy $\mathcal{L}_D$, and purple lines indicating source classification loss $\mathcal{L}_Y$.}
\label{figure_framework}
\end{center}
\vskip -0.2in
\end{figure}

\subsubsection{$\mathcal{L}_D$ (weighted source-target domain discrepancy)}
The weighted source-target domain discrepancy in Theorem \ref{theorem2} can be further expressed as:
\begin{equation}
\label{lossd}
\begin{split}
\textstyle\mathcal{L}_D(\theta_E^S, \theta_E^T, \theta_Y) & \textstyle= \sum_{k=1}^K w_T^k W_1(P_S^k,P_T^k) \\
& \textstyle= \sum_{k=1}^K w_T^k W_1(f_E^S(x_S^k),f_E^T(x_T^k))
\end{split}
\end{equation}
where $x_S^k$ are source samples with labels $y_S = k$, $x_T^k$ are target samples with predicted labels $\hat{y}_T = k$. In addition, we leverage the Sinkhorn algorithm \cite{cuturi2013sinkhorn} to approximate the Wasserstein metric.

\begin{table*}[t!]
\centering
\caption{Summary of UDA results on the digits datasets with shifted class distribution, measured in terms of prediction accuracy (\%) on the target domain.}
\centering
\begin{tabular}{c|cccc} 
\hline
~  & \begin{tabular}[c]{@{}c@{}}MNIST to \\~MNIST-M\end{tabular}  & \begin{tabular}[c]{@{}c@{}}MNISTM to \\~MNIST\end{tabular} 
 & \begin{tabular}[c]{@{}c@{}}USPS to \\~MNIST\end{tabular}   & \begin{tabular}[c]{@{}c@{}}SVHN to \\~MNIST\end{tabular} \\ 
\hline
DANN \cite{ganin2016domain}   & 63.1       & 93.0  &   59.8  &  64.9 \\
WDGRL \cite{shen2018wasserstein}  &  60.4      & 93.6 &   63.9    &  64.3 \\
DSN \cite{bousmalis2016domain}    & 62.3       & 98.4    &   59.9     &  15.2 \\
ADDA \cite{tzeng2017adversarial}  & 88.2       & 90.7 &  44.8   &  42.4 \\
CAT \cite{deng2019cluster} &    54.1   &  95.4  &   81.0   &  65.8 \\
CDAN \cite{long2018conditional}  &  58.7     &  96.0 &   42.0     &  38.3 \\
pixelDA\cite{bousmalis2017unsupervised} & 95.0       & 96.0 &   72.0     &  68.0  \\
DRANet \cite{lee2021dranet}  & 95.2       & 97.8   &   86.5     &  40.2  \\ 
Source Only  & 47.9      & 91.5  &    40.8    &  53.7 \\ 
\hline
DARSA & \textbf{96.0}      & \textbf{98.8}   &   \textbf{92.6}      &  \textbf{90.1} \\
\hline
\end{tabular}
\label{tab:1}
\vspace{-3mm}
\end{table*}

\subsubsection{$\mathcal{L}_{C}$ (Clustering loss)} 
\label{sec:clustering_loss}
Inspired by CAT \cite{deng2019cluster}, the clustering loss $\mathcal{L}_C$ consists of two losses: the \textit{intra-clustering loss $\mathcal{L}_{intra}$} and the \textit{inter-clustering loss $\mathcal{L}_{inter}$}.


$\mathcal{L}_{intra}$ was first proposed to direct features of the same label to concentrate as well as to push features of different labels to separate from each other with at least a user-specified distance $m$ \cite{luo2018smooth}. We use the following definition of $\mathcal{L}_c$:
\begin{equation}
\label{lossc}
\begin{aligned}
& \mathcal{L}_{intra}(\theta_E^S, \theta_E^T, \theta_Y)\\ &= \mathcal{L}_{intra} (f_E^S(\mathcal{X_S})) +  \mathcal{L}_{intra} (f_E^T(\mathcal{X_T})) 
\end{aligned}
\end{equation}
with
\begin{align}
\begin{aligned}
&  \mathcal{L}_{intra} (f_E^S(\mathcal{X}))\\& =  \frac{1}{N^2} \sum_{i,j=1}^N \Big[\delta_{ij} \|f_E(x_i) - f_E(x_j)\|^2  \nonumber \\
&  + (1-\delta_{ij}) \max \left(0, m - \|f_E(x_i) - f_E(x_j)\|^2\right)\Big]
\end{aligned}
\end{align}
where $N$ represents the number of samples in the domain $\mathcal{X}$, $\delta_{ij} = 1$ only if $x_i$ and $x_j$ have the same label (using ground truth label in source domain; use predicted label in target domain), otherwise $\delta_{ij} = 0$. $m$ is a pre-defined distance controlling how separate we want each class to be.

$\mathcal{L}_{inter}$ is leveraged to help domain adaptation by aligning centroids of the source sub-domains with that of the corresponding target sub-domains in the representation space. The definition of $\mathcal{L}_{inter}$ is

\label{lossa}
\begin{align}
\begin{aligned}
&\textstyle\mathcal{L}_{inter}(\theta_E^S, \theta_E^T, \theta_Y) \\ &= \frac{1}{K} \sum_{k=1}^K \|\text{c(}f_E^S(x_T^k)\text{)} - \text{c(}(f_E^T(x_T^k)\text{)}\|^2
\end{aligned}
\end{align}
where c($\cdot$) calculates the centroids of the sub-domains.

\section{Results}
\label{sec:results}
We first demonstrate through empirical evidence that our proposed generalization bound is stronger. After that, we evaluate our proposed method, DARSA, on four benchmark UDA tasks using digit datasets and two tasks using real-world neural datasets. All source-target datasets are resampled to enforce shifted label distributions. We show that our method outperforms all state-of-the-art transfer learning methods under shifted label distributions, including methods specifically designed for computer vision tasks. As shown in Figure \ref{featurespace}, our method learns a feature space that is discriminative and domain-invariant,  resulting in improved performance in target domain predictions.

\subsection{Empirical Analysis of our Proposed Generalization Bound}
\label{sec:empircalbound}
In our empirical analysis of our proposed generalization bound, we evaluate the bound on the MNIST to MNIST-M UDA task (dataset details in Section \ref{sec:digit}). As shown in Figure \ref{figure_emp_compare}, our empirical results demonstrate that our theory, i.e., Theorem \ref{theorem2} provides a stronger generalization bound than Theorem \ref{theorem1}. Additional empirical results to support this claim are provided in Appendix \ref{emp_tighter_more}. 

\begin{figure}[t]
\vskip -0.2in
\begin{center}
\subfigure[Domain discrepancy]{\includegraphics[width=0.48\columnwidth]{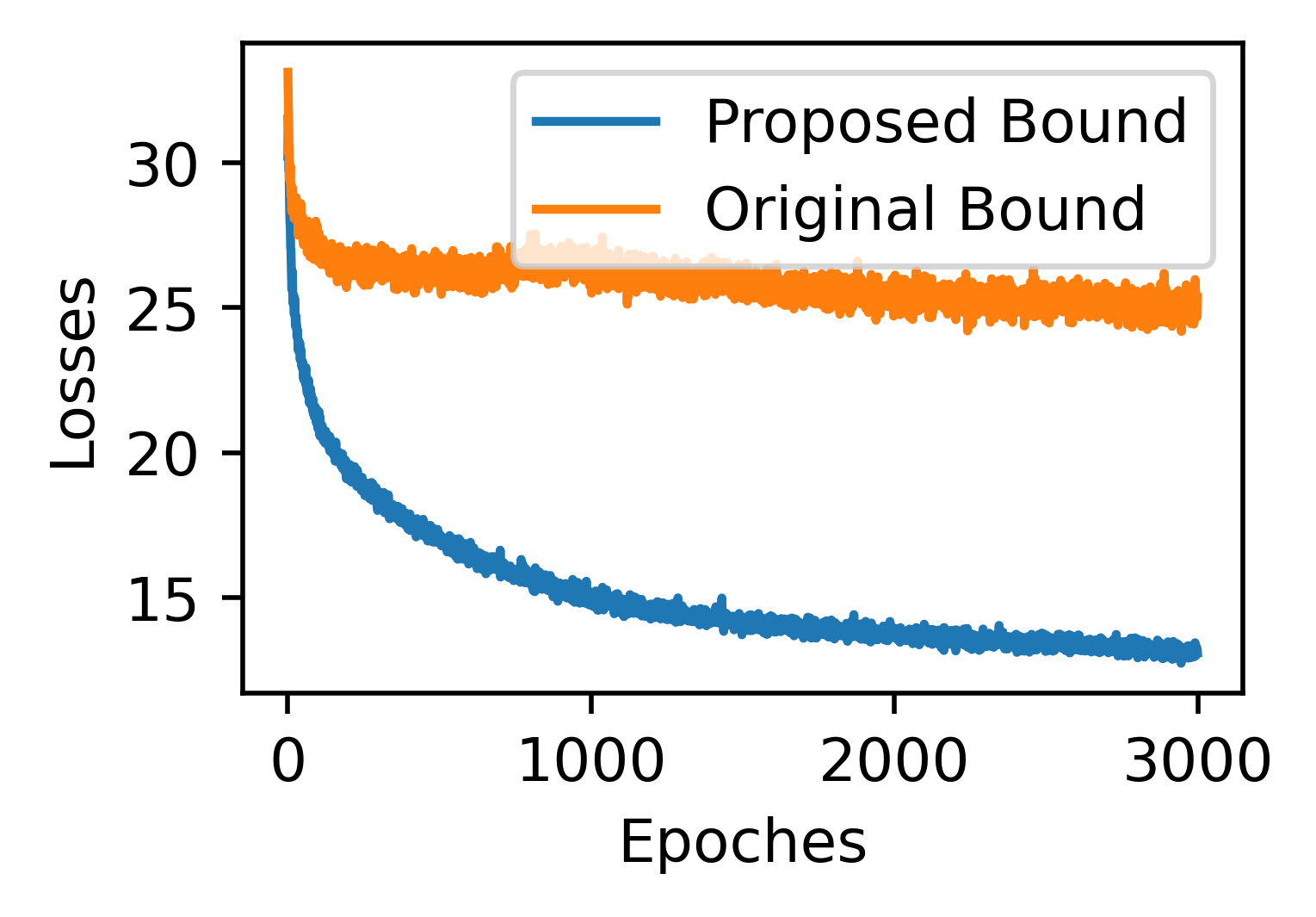}}
\subfigure[Source Classification Loss]{\includegraphics[width=0.51\columnwidth]{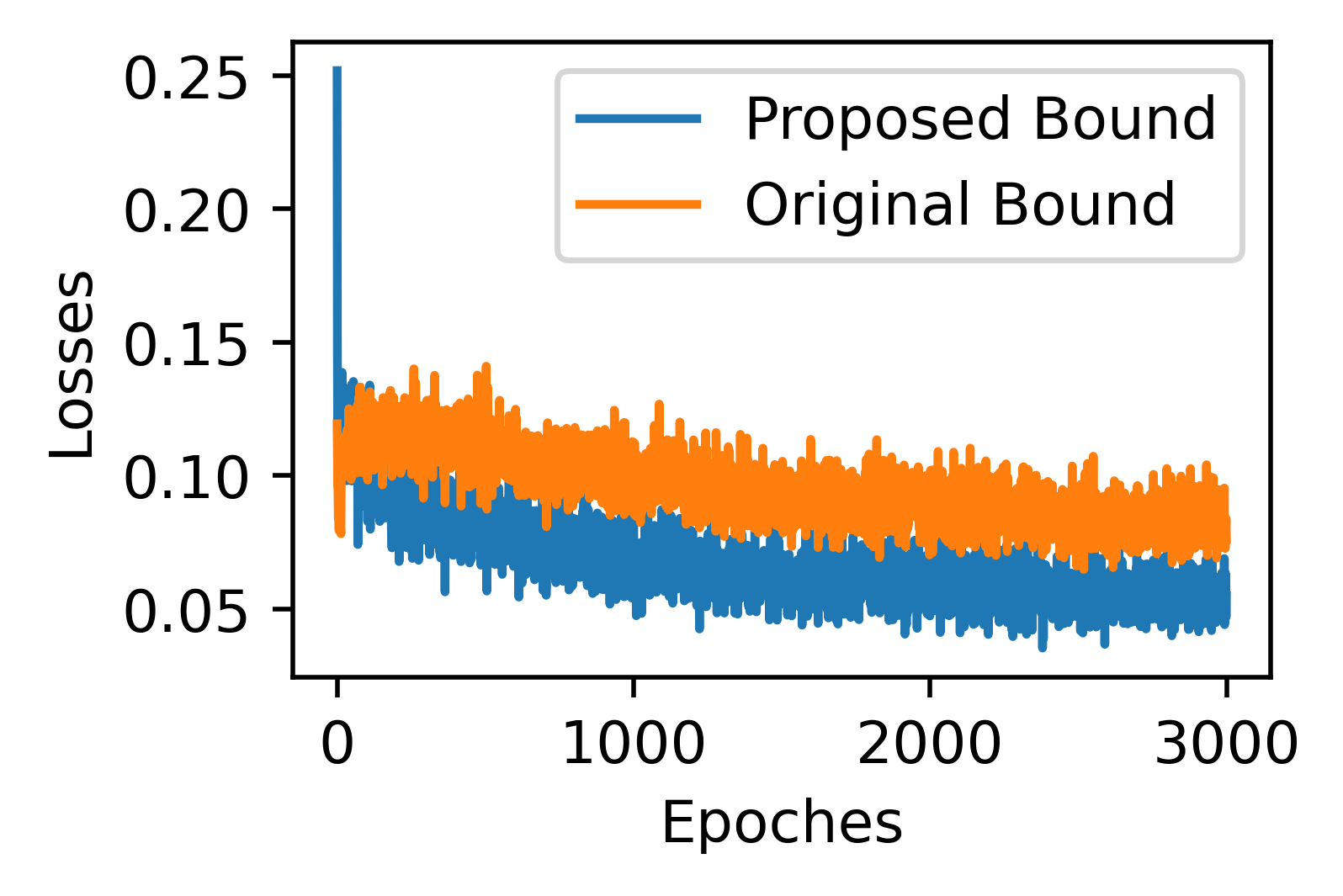}}
\caption{For MNIST to MNIST-M UDA task with shifted class distribution, a) Compare the domain discrepancy term ($\mathcal{L}_D$) in our proposed bound to that in Remark \ref{remark:smooth} 
 b) Compare the source classification term ($\mathcal{L}_Y$) in our proposed bound to that in \ref{theorem1}}.
\label{figure_emp_compare}
\end{center}
\vskip -0.3in
\end{figure}

\subsection{Experiments on Digit Datasets with Shifted Class Distribution}
\label{sec:digit}
We conduct experiments on four digit datasets: MNIST \cite{lecun1998gradient}, MNIST-M \cite{ganin2016domain}, USPS, and Street View House Numbers (SVHN) \cite{netzer2011reading}. To create class distribution shifts, we subsample the datasets so that the proportion of odd digits is three times the proportion of even digits in the source dataset, and vice versa in the target dataset. To ensure a fair comparison, all methods use a subset of the labeled target domain data (around 1000 samples) as a validation set for the hyperparameters search. This validation set performance serves as an upper bound for evaluating the performance of UDA methods. The experimental details (dataset description, model structure and hyperparameters) are provided in appendix \ref{digit_intro}. 

As shown in Table \ref{tab:1}, our model outperforms all competing methods. We note that most of the SOTA comparisons are not specifically designed for shifted class distribution scenarios, and this setting caused issues in several competing methods. We use Adaptive Experimentation (Ax) platform \cite{bakshyopen2019,letham2019constrained}, an automatic tuning approach to select hyperparameters to maximize their performance in domain shifting scenarios (see Appendix \ref{digit_hyper_details}).

\subsection{Experiments on the TST Dataset with Shifted Class Distribution}
\label{sec:TST}
The Tail Suspension Test (TST) dataset \cite{gallagher2017cross} consists of local field potentials (LFPs) recorded from the brains of 26 mice. These mice belong to two genetic backgrounds: a genetic model of bipolar disorder (Clock-$\Delta$19) and wildtype mice. Each mouse is subjected to 3 behavioral assays which are designed to vary stress: home cage (HC), open field (OF), and tail-suspension (TS). We conduct experiments on two transfer learning tasks using these neural activity data: transferring from wildtype mice to the bipolar mouse model and vice versa. We aim to predict for each one second window which of the 3 conditions -  HC, OF, or TS - the mouse is currently experiencing. To create class distribution shifts, we subsample the datasets so that we have 6000 Homecage observations, 3000 Open Field observations, and 6000 Tail Suspension observations in the bipolar genotype dataset and 3000 Homecage observations, 6000 OpenField observations, and 3000 Tail Suspension observations in the wildtype genotype dataset. The experimental details (dataset description, model structure and hyperparameters) are provided in appendix \ref{neural_intro}. As shown in Table \ref{tab:neural}, our model outperforms all competing methods. 

\begin{table}[t]
\centering
\caption{Summary of UDA results on the TST datasets with shifted class distribution, measured in terms of prediction accuracy (\%) on the target domain.}
\centering
\begin{tabular}{c|cc} 
\hline
& \begin{tabular}[c]{@{}c@{}}Bipolar to \\~Wildtype\end{tabular}  & \begin{tabular}[c]{@{}c@{}}Wildtype to \\~Bipolar\end{tabular}\\ 
\hline
DANN \cite{ganin2016domain}   & 79.9       & 81.5   \\
WDGRL \cite{shen2018wasserstein}  &  79.6   & 79.5  \\
DSN \cite{bousmalis2016domain}    & 79.4       & 80.9   \\
ADDA \cite{tzeng2017adversarial}  & 75.1       & 72.6  \\
CAT \cite{deng2019cluster} &    77.3   &  78.6 \\
CDAN \cite{long2018conditional}  &   75.0     &  73.6 \\
Source Only  &  73.8      &  70.4  \\ 
\hline
DARSA & \textbf{86.6}     & \textbf{84.8}  \\
\hline
\end{tabular}
\label{tab:neural}
\vskip -0.3in
\end{table}

\subsection{Analysis of Feature Space}
We visualize the feature spaces learned by DANN \cite{ganin2016domain} and our method, DARSA, using UMAP \cite{sainburg2021parametric}. As shown in Figure \ref{featurespace} (a) and (b), features learned with DARSA form stronger clusters when the labels are the same, and clusters with different labels are more separated from one another. In contrast, DANN \cite{ganin2016domain} fails to learn a good source-target domain alignment in the feature space (shown in Figure \ref{featurespace} (c) and (d)) in the presence of class distribution shifts. This confirms that our method, DARSA, can learn a class-conditional feature space that is discriminative and domain-invariant, which improves performance in target domain prediction. 


\begin{figure}[ht]
\vskip -0.15in
\begin{center}{\includegraphics[width=\columnwidth]{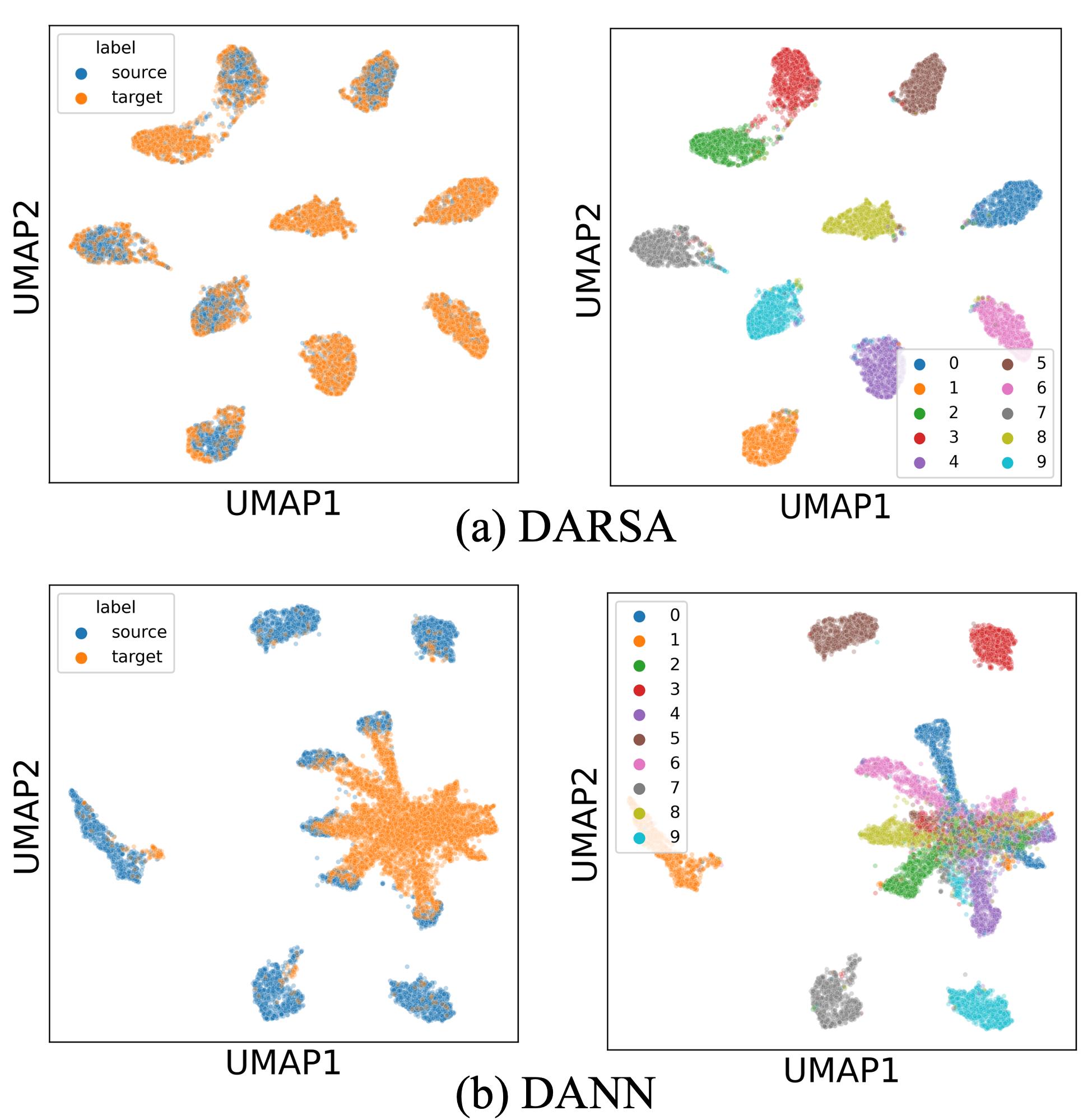}}
\caption{For MNIST to MNIST-M UDA task with shifted class distribution (a) feature space learned by our method, DARSA. (b) feature space learned by DANN. Left panel: colored by source/target; Right panel: colored by true label (digit). The features are projected to 2-D using UMAP.}
\label{featurespace}
\end{center}
\vskip -0.3in
\end{figure}

\section{Discussion and Conclusion}
In this paper, we propose a novel bound for UDA that motivates a novel algorithm with improved performance in various tasks. Since most domain adaptation work is based on reducing the distribution gaps in the source and target domain without incorporating class-conditional structure, we address this challenge to improve alignment between domains during training.  We show that our method outperforms state-of-the-art methods in class imbalance and class distribution shifting scenarios. Our work focuses on classification tasks with source and target having the same number of distinct classes; however, in some cases, this could be addressed by combining similar classes. In addition, by choosing an appropriate clustering algorithm within the framework, this work has the potential to extend to regression tasks. Our current method is also limited to two domains, and future work will extend our method to multiple domains. Lastly, \citet{johansson2016learning} shows that estimating counterfactual outcomes in causal inference under ignorability is mathematically equivalent to UDA between domains $D \in \{0,1\}$ under covariate shift \cite{johansson2020generalization}. With the equivalency, our framework has the potential to be used in causal inference scenarios.


\bibliography{example_paper}
\bibliographystyle{icml2023}

\newpage
\appendix
\onecolumn

\section{Definitions and Proofs}
\begin{definition}
\label{def:lipschitz}
    For some $K \geq 0$, the set of $K$-Lipschitz functions denotes the set of functions $f$ that verify: 
    $$
    \|f(x) - f(x')\| \leq K \|x-x'\|, \; \forall x,x' \in \mathcal{X}
    $$
 \end{definition}
 In the coming proofs, we assume that the hypothesis class $\mathbb{H}$ is a subset of $\lambda_{H}$-Lipschitz functions, where $\lambda_{H}$ is a positive constant, and we assume that the true labeling functions are $\lambda$-Lipschitz for some positive real number $\lambda$.
\begin{theorem}[Overall Generalization Bound (Theorem A.8 in \cite{li2018extracting})]
\label{appendix1}
For a hypothesis $f \in \mathbb{H}$ 
\begin{equation}
\gamma_T(f,g_T) \leq \gamma_S(f,g_S) + (\lambda + \lambda_H) W_1(P_S, P_T) + \gamma^{\star} 
\end{equation}
\end{theorem}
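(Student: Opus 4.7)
The plan is to mimic the classical Ben-David style generalization argument while swapping the $\mathcal{H}$-divergence for the Wasserstein-1 distance via its Kantorovich-Rubinstein dual representation: for every $L$-Lipschitz test function $\varphi$, one has $\mathbb{E}_{P_T}[\varphi] - \mathbb{E}_{P_S}[\varphi] \le L \cdot W_1(P_S, P_T)$. All three ingredients of the upper bound ($\gamma_S(f)$, a multiple of $W_1(P_S,P_T)$, and $\gamma^{\star}$) should then emerge by applying the triangle inequality to the $L^1$-type discrepancy $\gamma_{\mathcal{D}}(u,v) = \mathbb{E}_{\mathcal{D}}|u-v|$ and identifying the residual as the ideal joint error.

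Concretely, I would first choose the ideal joint hypothesis $h^{\star} \in \arg\min_{h \in \mathbb{H}}\{\gamma_S(h) + \gamma_T(h)\}$, so that $\gamma_S(h^{\star}) + \gamma_T(h^{\star}) = \gamma^{\star}$. Because $\gamma_{\mathcal{D}}$ is a pseudometric on functions, I can apply the triangle inequality on $P_T$ to peel off $h^{\star}$:
\[
\gamma_T(f) = \gamma_{P_T}(f, g_T) \le \gamma_{P_T}(f, h^{\star}) + \gamma_{P_T}(h^{\star}, g_T) = \gamma_{P_T}(f, h^{\star}) + \gamma_T(h^{\star}).
\]
Next I would swap the middle term from $P_T$ to $P_S$ using Kantorovich-Rubinstein. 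Because any difference between a hypothesis in $\mathbb{H}$ and one of the true labellers $g_S, g_T$ is Lipschitz with constant at most $\lambda + \lambda_H$, decomposing $|f - h^{\star}|$ through $g_S$ and applying the dual representation to each Lipschitz piece yields
\[
\gamma_{P_T}(f, h^{\star}) \le \gamma_{P_S}(f, h^{\star}) + (\lambda + \lambda_H)\, W_1(P_S, P_T).
\]
A final triangle inequality on $P_S$ gives $\gamma_{P_S}(f, h^{\star}) \le \gamma_S(f) + \gamma_S(h^{\star})$, and assembling the three inequalities together with $\gamma_S(h^{\star}) + \gamma_T(h^{\star}) = \gamma^{\star}$ reproduces the stated bound.

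The main obstacle I anticipate is the Lipschitz bookkeeping in the Kantorovich-Rubinstein step: the naive Lipschitz bound on $|f - h^{\star}|$ is $2\lambda_H$ rather than the sharper $\lambda + \lambda_H$ appearing in the statement, so I would need to route the difference through the ground-truth labelling function $g_S$ (or $g_T$), so that each Lipschitz test function to which Kantorovich-Rubinstein is applied pairs exactly one element of $\mathbb{H}$ with one of $g_S, g_T$, and then absorb the resulting $\gamma_S(h^{\star})$ contribution into the definition of $\gamma^{\star}$. Once this routing is fixed, the remaining $L^1$-triangle manipulations and the minimization defining $\gamma^{\star}$ are essentially bookkeeping.
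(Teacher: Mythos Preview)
Your plan is structurally sound and is essentially the Ben-David Theorem~2 argument that the paper itself invokes by citation at the very end of its proof. The paper's explicit derivation, however, orders the steps differently in a way that delivers the constant $(\lambda+\lambda_H)$ without any detour: it starts from the identity
\[
\gamma_T(f,g_T)\;\le\;\gamma_S(f,g_S)\;+\;|\gamma_S(f,g_T)-\gamma_S(f,g_S)|\;+\;|\gamma_T(f,g_T)-\gamma_S(f,g_T)|,
\]
bounds the middle term by $\mathbb{E}_{P_S}[|g_S-g_T|]$ via the reverse triangle inequality, and applies Kantorovich--Rubinstein \emph{only} to the last term, where the test function $x\mapsto|f(x)-g_T(x)|$ already pairs one hypothesis with one true labeller and is therefore $(\lambda+\lambda_H)$-Lipschitz from the outset. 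The residual $\mathbb{E}_{P_S}[|g_S-g_T|]$ is then traded for $\gamma^{\star}$ by citing Ben-David's Theorem~2 (which is precisely the $h^{\star}$ trick you propose).

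The obstacle you flag is real, and your proposed routing does not close it. Decomposing $|f-h^{\star}|\le|f-g_S|+|g_S-h^{\star}|$ and swapping both pieces from $P_T$ to $P_S$ costs $2(\lambda+\lambda_H)\,W_1$, not a single factor; swapping only one piece leaves a cross term such as $\gamma_{P_T}(h^{\star},g_S)$ that is neither $\gamma_S(h^{\star})$ nor $\gamma_T(h^{\star})$ and hence cannot be absorbed into $\gamma^{\star}$. The paper's resolution is simpler than any such routing: perform the single Kantorovich--Rubinstein swap on the pair $(f,g_T)$ \emph{before} introducing any ideal hypothesis. Once the measure has been moved to $P_S$, everything that remains (including the eventual appeal to $h^{\star}$) is a pure $L^1$ triangle inequality under the fixed measure $P_S$, so no further Lipschitz bookkeeping is required.
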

where $\gamma^{\star} = \min_{f \in \mathbb{H}} \gamma_S(f) + \gamma_T(f)$ measures how fundamentally different the true labels are for the two domains.

\begin{proof}
Let f be a hypothesis function in $\mathbb{H}$, we have that
\begin{equation}
\gamma_T(f,g_T) = \gamma_T(f,g_T) + \gamma_S(f,g_S) - \gamma_S(f,g_S) + \gamma_S(f,g_T) - \gamma_S(f,g_T)
\end{equation}
And then bound the output term by taking the absolute value of differences:
\begin{equation}
\begin{split}
  \gamma_T(f,g_T) &\leq \gamma_S(f,g_S) + |\gamma_S(f,g_T) - \gamma_S(f,g_S)| + |\gamma_T(f,g_T) - \gamma_S(f,g_T)| \\
  &\leq \gamma_S(f,g_S) + \mathbb{E}_{X_S}[|g_S(x)-g_T(x)|] + |\gamma_T(f,g_T) - \gamma_S(f,g_T)|
\end{split}
  \end{equation}

As stated in \cite{li2018extracting}), the first two terms proceed exactly as by \cite{ben2010theory}; further derivations are not provided. Let $P_S$ and $P_T$ be the densities of $X_S$ and $X_T$, respectively. 
\begin{equation}
\begin{split}
|\gamma_T(f,g_T) - \gamma_S(f,g_T)| & \leq \left |\int (P_T(x)-P_S(x))|f(x) - g_T(x)|dx \right |
\end{split}
  \end{equation}

Since our hypothesis class $\mathbb{H}$ is assumed to be $\lambda_H$-Lipschitz and the true labeling functions are $\lambda$-Lipschitz, we have that for every function $f\in \mathbb{H}$, $h:x\mapsto |f(x) - g_{T}(x)|$ is $\lambda+\lambda_{H}$-Lipschitz and it takes its values in $[0,1]$.
Therefore,
\begin{equation}
\begin{split}
|\gamma_T(f,g_T) - \gamma_S(f,g_T)| & \leq \left |\sup_{h: \mathcal{X} \rightarrow [0,1], ||h||\leq \lambda + \lambda_H} \int (P_T(x) - P_S(x))h(x)dx \right| \\
  & = \left|\sup_{h: \mathcal{X} \rightarrow [0,1], ||h||\leq \lambda + \lambda_H}  (\mathbb{E}_{X_T}[h(x)] - \mathbb{E}_{X_S}[h(x)]) \right| \\
\end{split}
  \end{equation}

Note that due to the symmetric nature of the function space (i.e if $h$ is K-Lipschitz then $-h$ is K-Lipschitz) we can just pick either side to lead with and drop the absolute value, yielding 

\begin{equation}
\begin{split}
  |\gamma_T(f,g_T) - \gamma_S(f,g_T)| 
  & \leq\left|\max_{h: \mathcal{X} \rightarrow [0,1], ||h||\leq \lambda + \lambda_H}  (\mathbb{E}_{X_T}[h(x)] - \mathbb{E}_{X_S}[h(x)]) \right| \leq (\lambda + \lambda_H) W_1(P_S,P_T)
  \end{split}
 \end{equation}
  
Following the Theorem 2 of \citet{ben2010theory}, we can also easily bound the target error $\gamma_T(f,g_T)$ by:
\begin{equation}
\gamma_T(f,g_T) \leq \gamma_S(f,g_S) + (\lambda + \lambda_H) W_1 (P_S,P_T) + \gamma^{\star}
 \end{equation}
where $\gamma^{\star} = \min_{f \in \mathbb{H}} \gamma_S (f,g_S) + \gamma_T (f,g_T)$ is the minimum error can be reached.
\end{proof}

\begin{lemma}[Decomposition of the Classification Error] For any hypothesis $f\in\mathbb{H}$,
\label{appendix2}
\begin{equation}
\begin{split}
\gamma_S(f) =\sum_{k=1}^K w_S^k  \gamma_{S}^k(f), \\
\gamma_T(f)  = \sum_{k=1}^K w_T^k  \gamma_{T}^k(f).
\end{split}
\end{equation}
\end{lemma}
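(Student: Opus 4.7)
The plan is to unfold the definition of $\gamma_S(f)$ as an expectation against the source density $P_S$, substitute the mixture decomposition $P_S=\sum_{k=1}^K w_S^k P_S^k$, and then interchange the finite sum with the integral to recover the weighted sum of sub-domain errors. The argument for $\gamma_T(f)$ is identical with $S$ replaced by $T$.

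Concretely, I would write
\begin{equation*}
\gamma_S(f) \;=\; \mathbb{E}_{x\sim P_S}\bigl[|f(x)-g_S(x)|\bigr] \;=\; \int |f(x)-g_S(x)|\, P_S(x)\,dx,
\end{equation*}
plug in $P_S(x)=\sum_{k=1}^K w_S^k P_S^k(x)$, and use linearity of the integral (justified because all terms are non-negative and the sum is finite, so Fubini/Tonelli applies trivially) to pull the sum outside:
\begin{equation*}
\gamma_S(f) \;=\; \sum_{k=1}^K w_S^k \int |f(x)-g_S(x)|\, P_S^k(x)\,dx \;=\; \sum_{k=1}^K w_S^k\, \gamma_S^k(f),
\end{equation*}
where the last equality uses the definition $\gamma_S^k(f)=\mathbb{E}_{x\sim P_S^k}[|f(x)-g_S(x)|]$. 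The target-domain identity is obtained by the same chain of equalities, using $P_T=\sum_{k=1}^K w_T^k P_T^k$ and $g_T$ in place of $g_S$.

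There is essentially no obstacle here; the lemma is a bookkeeping statement about how expectations behave under a mixture decomposition. The only thing to be slightly careful about is that $g_S$ (respectively $g_T$) really is the true labeling function for the entire source (respectively target) domain, so the \emph{same} function $g_S$ appears in every sub-domain term $\gamma_S^k(f)=\mathbb{E}_{x\sim P_S^k}[|f(x)-g_S(x)|]$; the sub-domains differ only in their marginals over $x$, not in the labeling rule. Given this, the derivation is a one-line application of linearity of expectation.
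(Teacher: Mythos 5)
Your proposal is correct and matches the paper's own proof essentially line for line: both expand $\gamma_S(f)$ as an integral against $P_S$, substitute the mixture decomposition $P_S(x)=\sum_{k=1}^K w_S^k P_S^k(x)$ (written in the paper as $P_S(x|c=k)$), exchange the finite sum with the integral, and identify each term as $w_S^k\gamma_S^k(f)$, with the identical argument for the target domain. Your added remark that the same labeling function $g_S$ appears in every sub-domain term is a sensible clarification but does not change the argument.
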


\begin{proof}
\noindent We can write out $\gamma_S(f,g_S)$ as clustering specific component. Here we use c to represent the clustering index.
\begin{equation}
\begin{split}
  \gamma_S(f,g_S) & = \mathbb{E}_{X_S}\left[|f(x) - g_S(x)|\right] \\
  & = \int P_S (x) |f(x) - g_S(x)|dx \\
  & = \int \sum_{k=1}^K w_S^k P_S(x|c=k) |f(x) - g_S(x)|dx \\
   & = \sum_{k=1}^K w_S^k \int P_S(x|c=k) |f(x) - g_S(x)|dx \\
   & = \sum_{k=1}^K w_S^k  \int P_S(x|c=k) |f(x) - g_S(x)|dx \\
   & = \sum_{k=1}^K w_S^k  \gamma_{S}^k(f,g_S)
\end{split}
\end{equation}
\end{proof}

With similar proof, we have:
\begin{equation}
\gamma_T(f,g_T) =\sum_{k=1}^K w_T^k  \gamma_{T}^k(f,g_T) 
\end{equation}

\begin{theorem}[Class-conditional Generalization Bound]
\label{theorem2proof}
\begin{equation}
    \gamma_T(f,g_T) \leq \sum_{k=1}^K w_T^k \gamma_S^k(f,g_S) 
    + \sum_{k=1}^K w_T^k W_1(P_S^k,P_T^k) + \sum_{k=1}^K w_T^k (\gamma^{k})^{\star}
\end{equation}
\end{theorem}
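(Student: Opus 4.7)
The plan is to derive Theorem~\ref{theorem2proof} by combining the two results that immediately precede it in the paper, namely the per-subdomain bound in Proposition~\ref{proposition1} and the decomposition identity in Lemma~\ref{lemma1}. The key observation is that the decomposition in Lemma~\ref{lemma1} uses the \emph{target} mixture weights $w_T^k$ on the target side, so if I start from $\gamma_T(f,g_T)$ and decompose, I naturally obtain a $\sum_k w_T^k$ in front of every term; this is exactly the reweighting that appears on the right-hand side of the theorem.

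Concretely, I would proceed as follows. First, invoke Lemma~\ref{lemma1} (target side only) to write
\begin{equation*}
\gamma_T(f,g_T) \;=\; \sum_{k=1}^{K} w_T^k \, \gamma_T^k(f,g_T).
\end{equation*}
Second, for each $k \in \{1,\dots,K\}$ apply Proposition~\ref{proposition1} to the $k$-th subdomain pair $(P_S^k, P_T^k)$:
\begin{equation*}
\gamma_T^k(f,g_T) \;\leq\; \gamma_S^k(f,g_S) + W_1(P_S^k, P_T^k) + (\gamma^{k})^{\star}.
\end{equation*}
Third, multiply each of these $K$ inequalities by the nonnegative weight $w_T^k$ and sum over $k$; since $w_T^k \ge 0$, inequalities are preserved, and by linearity the right-hand side splits into the three desired sums:
\begin{equation*}
\sum_{k=1}^K w_T^k \gamma_S^k(f,g_S) \;+\; \sum_{k=1}^K w_T^k W_1(P_S^k,P_T^k) \;+\; \sum_{k=1}^K w_T^k (\gamma^{k})^{\star}.
\end{equation*}
Chaining these two steps yields the claimed bound.

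I do not expect any substantive obstacle: the theorem is essentially a bookkeeping consequence of Lemma~\ref{lemma1} and Proposition~\ref{proposition1}. The only point that deserves care is making sure that the reweighting on the source classification term is by $w_T^k$ rather than $w_S^k$; this is forced by the fact that the decomposition is applied to $\gamma_T$, not $\gamma_S$, so the target mixture weights propagate uniformly to all three terms on the right. The Lipschitz hypotheses ($\lambda$ for $g_S,g_T$, $\lambda_H$ for $\mathbb{H}$, and $\lambda + \lambda_H \le 1$) are inherited directly from Proposition~\ref{proposition1}, so no additional assumptions are required. Hence the proof reduces to one application of Lemma~\ref{lemma1} followed by a weighted sum of $K$ instances of Proposition~\ref{proposition1}.
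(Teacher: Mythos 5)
Your proof is correct and follows exactly the paper's own argument: the paper likewise decomposes $\gamma_T(f,g_T)$ via Lemma~\ref{lemma1} into $\sum_{k=1}^K w_T^k \gamma_T^k(f,g_T)$ and then bounds each term by Proposition~\ref{proposition1}, so the target weights $w_T^k$ propagate to all three sums as you describe. No gaps; this matches the paper's proof essentially verbatim.
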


\begin{proof}
\begin{equation}
\begin{aligned}
\gamma_T(f,g_T) & \overset{\mathrm{Lemma \ref{lemma1}}}{=} \sum_{k=1}^K w_T^k  \gamma_{T}^k(f,g_T) \\
&  \overset{\mathrm{Proposition \ref{proposition1}}}{\leq}   \sum_{k=1}^K w_T^k \{\gamma_S^k(f,g_S) + W_1 (P_S^k,P_T^k) + (\gamma^{k})^{\star}\}
\end{aligned}
\end{equation}
\end{proof}




In Definition \ref{defwassm}, we define a Wasserstein-like distance between Gaussian Mixture Models, which uses Wasserstein-1 distance as a variation of the Proposition 4 in \cite{delon2020wasserstein}.

\begin{definition}[Wasserstein-like distance between Gaussian Mixture Models]
\label{defwassm}
Assume $P_S=\sum_{k=1}^K w_S^k P_S^k$ and  $P_T=\sum_{k=1}^K w_T^k P_T^k$ be two Gaussian mixtures. We define:
\begin{equation}
    MW_1(P_S,P_T) = \min_{w \in \Pi(w_S,w_T)} \sum_{k=1}^K \sum_{k^{\prime}=1}^K w_{k,k^{\prime}} W_1(P_S^k,P_T^k)
\end{equation}
where $\Pi(w_S,w_T)$ represents the simplex $\Delta^{K \times K}$ with marginals $w_S$ and $w_T$.
\end{definition}


\begin{lemma}[Extension to Wasserstein-1 - Lemma 4.1 of \cite{delon2020wasserstein}]
Explicit the distance $MW_1$ between a Gaussian mixture and a mixture of Dirac distributions.
\label{w1_proof_lemma}
Let $\mu_0 = \sum_{k=1}^{K_0} \pi_0^k \mu_0^k$ with $\mu_0^k = \mathcal{N} (m_0^k, \Sigma_0^k)$ and $\mu_1 = \sum_{k=1}^{K_1} \pi_1^k \delta_{m_1^k}$. Let $\Tilde{\mu_0} =  \sum_{k=1}^{K_0} \pi_0^k \delta_{m_0^k}$ ($\Tilde{\mu_0}$ only retains the means of $\mu_0$). Then, \\
$$MW_1(\mu_0,\mu_1)  \leq W_1 (\Tilde{\mu_0},\mu_1) + \sum_{k=1}^{K_0} \pi_0^k \sqrt{\text{tr }(\Sigma_0^k)} $$
\end{lemma}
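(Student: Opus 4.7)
The plan is to prove the bound via a triangle-inequality argument through the intermediate measure $\tilde{\mu}_0$. First I would establish that $MW_1$ itself obeys a triangle inequality on mixtures of Gaussians (and more generally on mixtures of any $\mathcal{P}_1(\mathbb{R}^d)$ components): given near-optimal couplings $w^{01}\in\Pi(\pi_0,\pi_0)$ for $(\mu_0,\tilde{\mu}_0)$ and $w^{12}\in\Pi(\pi_0,\pi_1)$ for $(\tilde{\mu}_0,\mu_1)$, the standard gluing construction (disintegrate $w^{12}$ against its first marginal, then compose index-wise with $w^{01}$) produces a coupling in $\Pi(\pi_0,\pi_1)$ whose cost is controlled by $MW_1(\mu_0,\tilde{\mu}_0)+MW_1(\tilde{\mu}_0,\mu_1)$, using only the triangle inequality for $W_1$ applied to each pair of components. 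This yields
\[
MW_1(\mu_0,\mu_1)\leq MW_1(\mu_0,\tilde{\mu}_0)+MW_1(\tilde{\mu}_0,\mu_1).
\]

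For the first summand, both marginals are $\pi_0$, so I would use the diagonal coupling $w_{k,k'}=\pi_0^k\,\mathbbm{1}\{k=k'\}$, which is admissible and yields $MW_1(\mu_0,\tilde{\mu}_0)\leq\sum_{k=1}^{K_0}\pi_0^k\,W_1(\mu_0^k,\delta_{m_0^k})$. Since transport to a Dirac is deterministic, $W_1(\mathcal{N}(m_0^k,\Sigma_0^k),\delta_{m_0^k})=\mathbb{E}_{X\sim\mathcal{N}(m_0^k,\Sigma_0^k)}\|X-m_0^k\|$, and Jensen's inequality (equivalently Cauchy--Schwarz) bounds this expectation by $\sqrt{\mathbb{E}\|X-m_0^k\|^2}=\sqrt{\mathrm{tr}(\Sigma_0^k)}$. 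Summing against $\pi_0^k$ produces exactly the trace-based error term in the lemma. For the second summand, $\tilde{\mu}_0$ and $\mu_1$ are both finite mixtures of Diracs, and $W_1(\delta_a,\delta_b)=\|a-b\|$, so the minimization $\min_{w\in\Pi(\pi_0,\pi_1)}\sum_{k,k'}w_{k,k'}\|m_0^k-m_1^{k'}\|$ is precisely the discrete Kantorovich formulation of $W_1(\tilde{\mu}_0,\mu_1)$; hence $MW_1(\tilde{\mu}_0,\mu_1)=W_1(\tilde{\mu}_0,\mu_1)$.

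Combining the three pieces gives the stated inequality. The main obstacle is cleanly justifying the triangle inequality for $MW_1$, since this distance restricts admissible couplings to those indexed by mixture components rather than arbitrary joint distributions on $\mathbb{R}^d\times\mathbb{R}^d$; I would handle it by writing the glued coupling explicitly in finite-dimensional simplex notation, which keeps the argument combinatorial and avoids measure-theoretic subtleties. The Jensen-type control of the Gaussian-to-Dirac distance and the collapse of $MW_1$ to $W_1$ on pure Dirac mixtures are then routine to verify.
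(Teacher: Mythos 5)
Your argument is correct, but it is organized differently from the paper's proof. The paper works directly inside the definition of $MW_1(\mu_0,\mu_1)$ with a single mixture-level coupling: it bounds each component cost $W_1(\mu_0^k,\delta_{m_1^l})$ by $W_2(\mu_0^k,\delta_{m_1^l})=\sqrt{\|m_1^l-m_0^k\|^2+\mathrm{tr}(\Sigma_0^k)}$, splits the square root by subadditivity, and then recognizes the remaining discrete minimization $\inf_{w}\sum_{k,l}w_{k,l}\|m_1^l-m_0^k\|$ as $W_1(\tilde{\mu}_0,\mu_1)$. You instead route through a triangle inequality $MW_1(\mu_0,\mu_1)\leq MW_1(\mu_0,\tilde{\mu}_0)+MW_1(\tilde{\mu}_0,\mu_1)$, established by gluing the discrete mixture couplings, then handle the first term with the diagonal coupling plus the Jensen bound $\mathbb{E}\|X-m_0^k\|\leq\sqrt{\mathrm{tr}(\Sigma_0^k)}$ and the second by noting $MW_1$ collapses to $W_1$ on mixtures of Diracs. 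Each piece checks out (the glued matrix $w_{k,l}=\sum_j w^{01}_{k,j}w^{12}_{j,l}/\pi_0^j$ has the right marginals, and the Dirac-mixture identification only needs the easy inequality $MW_1(\tilde{\mu}_0,\mu_1)\leq W_1(\tilde{\mu}_0,\mu_1)$, with a trivial mass-splitting remark if atoms coincide). What the two routes buy: the paper's is shorter and needs no metric property of $MW_1$, since everything happens inside one infimum and the Gaussian--Dirac $W_2$ formula does the work; yours is more modular, avoids that closed form entirely (so it applies verbatim to non-Gaussian components with finite second moments), and the gluing lemma you prove is not wasted effort --- the paper's subsequent Theorem~\ref{w1_proof} invokes ``$MW_1$ is a distance'' without proof, so your construction supplies exactly the ingredient used there.
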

\begin{proof}
\begin{equation}
\begin{split}
MW_1(\mu_0,\mu_1) & =\inf_{w \in \Pi(\pi_0,\pi_1)} \sum_{k,l} w_{k,l} W_1(\mu_0^k,\delta_{m_1^l}) \\
& \leq \inf_{w \in \Pi(\pi_0,\pi_1)} \sum_{k,l} w_{k,l} W_2(\mu_0^k,\delta_{m_1^l}) \\
& = \inf_{w \in \Pi(\pi_0,\pi_1)} \sum_{k,l} w_{k,l} \left[\sqrt{||m_1^l-m_0^k||^2 + \text{tr }(\Sigma_0^k)} \right] \\
& \leq \inf_{w \in \Pi(\pi_0,\pi_1)} \sum_{k,l} w_{k,l} ||m_1^l-m_0^k|| + \sum_{k} \pi_0^k \sqrt{\text{tr }(\Sigma_0^k)} \\
& \leq W_1 (\Tilde{\mu_0},\mu_1) + \sum_{k=1}^{K_0} \pi_0^k \sqrt{\text{tr }(\Sigma_0^k)} \\
\end{split}
\end{equation}
\end{proof}

\begin{theorem}[Extension to Wasserstein-1 - Proposition 6 of \cite{delon2020wasserstein}]
\label{w1_proof}
Let $P_S$ and $P_T$ be two Gaussian mixtures. If for $\forall$ k, $k^{\prime}$, we assume $\max_{k}(\text{
trace} (\Sigma_S^k)) \leq \epsilon$ and $\max_{k^{\prime}}(\text{trace} (\Sigma_T^{k^\prime})) \leq \epsilon$.
then:
\begin{equation}
    MW_1 (P_S, P_T) \leq W_1 (P_S, P_T) + 4\sqrt{\epsilon}
\end{equation}
\end{theorem}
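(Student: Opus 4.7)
My plan is to introduce the ``Dirac-mean'' surrogates $\tilde{P}_S = \sum_{k=1}^K w_S^k \delta_{m_S^k}$ and $\tilde{P}_T = \sum_{k=1}^K w_T^k \delta_{m_T^k}$, then chain together a triangle inequality for $MW_1$, the one-sided bound provided by Lemma \ref{w1_proof_lemma}, and the ordinary $W_1$ triangle inequality to trade $\tilde{P}_S, \tilde{P}_T$ back for $P_S, P_T$. The route proceeds in two halves: first, reduce $MW_1$ between the Gaussian mixtures to $W_1$ between their Dirac surrogates (plus an $O(\sqrt{\epsilon})$ error), then control that $W_1$ by $W_1(P_S, P_T)$ (plus another $O(\sqrt{\epsilon})$ error).

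For the first half, assuming $MW_1$ is a metric (true by a standard gluing-of-couplings argument), I write
\begin{equation*}
MW_1(P_S, P_T) \leq MW_1(P_S, \tilde{P}_T) + MW_1(\tilde{P}_T, P_T).
\end{equation*}
Because $\tilde{P}_T$ is a Dirac mixture, Lemma \ref{w1_proof_lemma} applies to both terms (to the second after appealing to the symmetry of $MW_1$), giving
\begin{align*}
MW_1(P_S, \tilde{P}_T) &\leq W_1(\tilde{P}_S, \tilde{P}_T) + \sum_{k} w_S^k \sqrt{\mathrm{tr}(\Sigma_S^k)}, \\
MW_1(P_T, \tilde{P}_T) &\leq W_1(\tilde{P}_T, \tilde{P}_T) + \sum_{k} w_T^k \sqrt{\mathrm{tr}(\Sigma_T^k)} = \sum_{k} w_T^k \sqrt{\mathrm{tr}(\Sigma_T^k)}.
\end{align*}
The uniform trace bounds $\mathrm{tr}(\Sigma_S^k), \mathrm{tr}(\Sigma_T^k) \leq \epsilon$ combined with $\sum_k w_S^k = \sum_k w_T^k = 1$ make each correction sum at most $\sqrt{\epsilon}$, so $MW_1(P_S, P_T) \leq W_1(\tilde{P}_S, \tilde{P}_T) + 2\sqrt{\epsilon}$.

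For the second half, I bound $W_1(\tilde{P}_S, \tilde{P}_T)$ via the ordinary $W_1$ triangle inequality,
\begin{equation*}
W_1(\tilde{P}_S, \tilde{P}_T) \leq W_1(\tilde{P}_S, P_S) + W_1(P_S, P_T) + W_1(P_T, \tilde{P}_T),
\end{equation*}
and construct the componentwise coupling that samples $k$ with probability $w_S^k$ and, conditional on $k$, transports $\delta_{m_S^k}$ to $\mathcal{N}(m_S^k, \Sigma_S^k)$ (analogously on the target side). Combining this coupling with $W_1 \leq W_2$ and the identity $W_2(\delta_m, \mathcal{N}(m, \Sigma)) = \sqrt{\mathrm{tr}(\Sigma)}$ (which follows from Jensen applied to the centered second moment) yields $W_1(\tilde{P}_S, P_S) \leq \sum_k w_S^k \sqrt{\mathrm{tr}(\Sigma_S^k)} \leq \sqrt{\epsilon}$, and likewise $W_1(P_T, \tilde{P}_T) \leq \sqrt{\epsilon}$. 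Assembling everything gives
\begin{equation*}
MW_1(P_S, P_T) \leq W_1(\tilde{P}_S, \tilde{P}_T) + 2\sqrt{\epsilon} \leq W_1(P_S, P_T) + 4\sqrt{\epsilon}.
\end{equation*}

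The main obstacle I anticipate is rigorously justifying the triangle inequality and symmetry of $MW_1$, since the paper never establishes that $MW_1$ is a true metric on Gaussian mixtures. I would insert a short gluing argument: take an optimal $w_{\cdot,\cdot}$ for $MW_1(P_S, \tilde{P}_T)$ and one for $MW_1(\tilde{P}_T, P_T)$, glue them along the shared marginal $\mathbf{w}_T$ to produce a valid element of $\Pi(\mathbf{w}_S, \mathbf{w}_T)$, and use linearity of the cost plus the $W_1$ triangle inequality on the component pairs. Everything else is bookkeeping: carefully tracking which weights are attached to which trace sum when invoking Lemma \ref{w1_proof_lemma}, and confirming that the componentwise coupling is indeed admissible because $\tilde{P}_S$ and $P_S$ share the same mixture weights $\mathbf{w}_S$ (so the ``identity on cluster index'' coupling is valid).
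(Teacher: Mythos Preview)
Your argument is correct and reaches the same constant $4\sqrt{\epsilon}$ using the same two ingredients as the paper---the triangle inequality for $MW_1$ (assumed to be a metric) together with Lemma~\ref{w1_proof_lemma}, and the bound $W_1(\tilde P,P)\le \sum_k w^k\sqrt{\mathrm{tr}(\Sigma^k)}$ via $W_1\le W_2$---but the route is organized differently. The paper does not insert the Dirac-mean surrogates $\tilde P_S,\tilde P_T$ directly into the triangle inequality; instead it picks \emph{arbitrary} approximating sequences $(P_S^n)_n,(P_T^n)_n$ of Dirac mixtures converging to $P_S,P_T$ in $\mathcal P_1(\mathbb R^d)$, writes $MW_1(P_S,P_T)\le MW_1(P_S^n,P_T^n)+MW_1(P_S,P_S^n)+MW_1(P_T,P_T^n)$, uses $MW_1=W_1$ on Dirac mixtures for the first term, applies Lemma~\ref{w1_proof_lemma} to the other two, and then passes to the limit $n\to\infty$. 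Your version is more elementary: by choosing the surrogate $\tilde P_T$ from the outset you avoid the limiting argument and the continuity of $W_1$ altogether, and you replace the paper's $W_1(\tilde P_S,P_S^n)\to W_1(\tilde P_S,P_S)$ step by a direct $W_1$ triangle inequality back to $W_1(P_S,P_T)$. The paper's approach buys a template that mirrors Delon--Desolneux's $W_2$ proof verbatim; yours buys a shorter, self-contained argument that needs no approximation or continuity, at the cost of one extra invocation of the $W_1$ triangle inequality. Your remark about justifying the metric property of $MW_1$ via gluing is well taken---the paper simply asserts ``since $MW_1$ is a distance''---and your sketch is the standard way to fill that gap.
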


\begin{proof}
Here, we follow the same structure of the proof for Wassertein-2 in \cite{delon2020wasserstein}. 
Let $(P_S^n)_n$ and $(P_T^n)_n$ be two sequences of mixtures of Dirac masses respectively converging to $P_S$ and $P_T$ in $\mathcal{P}_1 (\mathbb{R}^d)$. Since $MW_1$ is a distance,
$$
\begin{aligned}
MW_1 (P_S, P_T) & \leq MW_1 (P_S^n, P_T^n) + MW_1 (P_S, P_S^n) + MW_1 (P_T, P_T^n) \\ & = W_1 (P_S^n, P_T^n) + MW_1 (P_S, P_S^n) + MW_1 (P_T, P_T^n)
\end{aligned}
$$
We can study the limits of these three terms when n $\rightarrow +\infty$

First, observe that $MW_1 (P_S^n, P_T^n) =  W_1 (P_S^n, P_T^n) \underset{n \rightarrow +\infty}{\rightarrow} W_1 (P_S,P_T)$ since $W_1$ is continuous
on $\mathcal{P}_1 (\mathbb{R}^d)$.

Second, based on Lemma \ref{w1_proof_lemma}, we have that
$$MW_1 (P_S, P_S^n) = W_1 (\Tilde{P_S}, P_S^n) + \sum_{k=1}^K w_S^k \sqrt{\text{ tr} (\Sigma_S^k)} \underset{{n \rightarrow +\infty}}{\rightarrow}  W_1 (\Tilde{P_S}, P_S) + \sum_{k=1}^K w_S^k \sqrt{\text{ tr} (\Sigma_S^k)}$$ . \\
We observe that $x\mapsto \sqrt{x}$ is a concave function, thus by Jensen's inequality, we have that
$$
\sum_{k=1}^K w_S^k \sqrt{\text{ tr} (\Sigma_S^k)} \leq \sqrt{\sum_{k=1}^K w_S^k \text{ tr} (\Sigma_S^k)}
$$
Also By Jensen's inequality, we have that, $$W_1 (\Tilde{P_S}, P_S) \leq W_2 (\Tilde{P_S},P_S)$$
And from \cite{delon2020wasserstein} we have that 
$$
 W_2 (\Tilde{P_S},P_S) \leq \sqrt{\sum_{k=1}^K w_S^k \text{ tr} (\Sigma_S^k)}
$$
Similarly for $MW_1 (P_T, P_T^n)$ the same argument holds. 
Therefore we have,
$$
    \lim_{n\rightarrow \infty} MW_1 (P_S, P_S^n) \leq 2 \sqrt{\sum_{k=1}^K w_S^k \text{ tr} (\Sigma_S^k)} 
$$
And 
$$
    \lim_{n\rightarrow \infty} MW_1 (P_T, P_T^n) \leq 2 \sqrt{\sum_{k=1}^K w_S^k \text{ tr} (\Sigma_S^k)} 
$$
We can conclude that:
\begin{equation*}
    \begin{split}
        MW_1 (P_S, P_T) & \leq \lim \inf_{n \rightarrow \infty} (W_1 (P_S^n, P_T^n) + MW_1 (P_S, P_S^n) + MW_1 (P_T, P_T^n)) \\
        & \leq W_1 (P_S, P_T) + 2\sqrt{\sum_{k=1}^K w_S^k \text{ tr} (\Sigma_S^k)} + 2\sqrt{\sum_{k=1}^K w_T^k \text{ tr} (\Sigma_T^k)} \\
        & \leq W_1 (P_S, P_T) + 4 \sqrt{\epsilon}
    \end{split}
\end{equation*}
This concludes the proof.
\end{proof}

\begin{theorem}
\label{dis_stronger_proof}
If the following assumptions hold,
\begin{itemize}
\item For $k \in \{1,\ldots,K\}$, $P^k_S$ / $P^k_T$ are Gaussian distributions with mean $m_S^k$ / $m_T^k$ and covariance $\Sigma_S^k$ / $\Sigma_T^k$.
\item The distance between the paired source-target sub-domain is smaller or equal to the distance between the non-paired source-target sub-domain, i.e., for $k \neq k^\prime$, we have $W_1(P_S^k,P_T^k) \leq W_1(P_S^k,P_T^{k^\prime})$.
\item There exists a constant $\epsilon >0$, such that $\underset{1\leq k \leq K}{\max}(\text{
trace} (\Sigma_S^k)) \leq \epsilon$ and $\underset{1\leq k \leq K}{\max}(\text{trace} (\Sigma_T^{k^\prime})) \leq \epsilon$. This $\epsilon$ is assumed to be reasonably small.
\end{itemize}
Then 
$$\sum_{k=1}^K w_T^k W_1(P_S^k,P_T^k)\leq W_1(P_S,P_T)$$
which implies Theorem \ref{theorem2} provides a stronger source-target domain discrepancy term in the generalization bound than Theorem \ref{theorem1}.
\end{theorem}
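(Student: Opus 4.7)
The natural route goes through the Wasserstein-like distance $MW_1$ between Gaussian mixtures introduced in Definition~\ref{defwassm}. The plan is to establish the two-step chain
$$\sum_{k=1}^K w_T^k\,W_1(P_S^k, P_T^k) \;\leq\; MW_1(P_S, P_T) \;\leq\; W_1(P_S, P_T) + \delta_c,$$
where $\delta_c = 4\sqrt{\epsilon}$. The right-hand inequality is handed to us directly by Theorem~\ref{w1_proof}, whose hypotheses are exactly the Gaussianity of each sub-domain and the uniform trace-covariance bounds $\max_k \mathrm{tr}(\Sigma_S^k), \max_k \mathrm{tr}(\Sigma_T^k) \leq \epsilon$ assumed in the statement. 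This is the sole place the Gaussian mixture structure enters substantively.

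For the left-hand inequality I would start from the definition $MW_1(P_S, P_T) = \min_{w \in \Pi(w_S, w_T)} \sum_{k,k'} w_{k,k'}\,W_1(P_S^k, P_T^{k'})$. The paired-is-closest assumption, taken in its target-anchored form $W_1(P_S^{k'}, P_T^{k'}) \leq W_1(P_S^k, P_T^{k'})$ for $k \neq k'$ (equivalent to the stated form after index renaming and an appeal to the symmetry $W_1(\mu,\nu) = W_1(\nu,\mu)$), allows each cross term to be replaced by its paired counterpart, giving $\sum_{k,k'} w_{k,k'}\,W_1(P_S^k, P_T^{k'}) \geq \sum_{k,k'} w_{k,k'}\,W_1(P_S^{k'}, P_T^{k'})$. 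The target-marginal constraint $\sum_k w_{k,k'} = w_T^{k'}$ then collapses the double sum into $\sum_{k'} w_T^{k'}\,W_1(P_S^{k'}, P_T^{k'})$. Because this lower bound holds for every admissible coupling $w$, it also lower bounds $MW_1(P_S, P_T)$.

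Chaining the two inequalities delivers the stated conclusion. The main obstacle is the first link: the paired-closest assumption as written pins the source index while varying the target, whereas to extract the factor $w_T^k$ (rather than $w_S^k$) from the coupling one has to pin the target index and vary the source. Bridging this gap requires invoking the symmetric form of the assumption, which is mild and natural since "paired" is a mutual notion of nearest neighbor among sub-domains, but should be stated explicitly as part of the hypotheses. A secondary, routine check is verifying that all the relevant Wasserstein and $MW_1$ computations remain finite under the Borel/first-moment setup introduced in the preliminaries, which follows because each Gaussian component lies in $\mathcal{P}_1(\mathbb{R}^d)$ and the mixture weights form a simplex.
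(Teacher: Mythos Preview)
Your proposal is correct and follows essentially the same two-step route as the paper: first bound $\sum_k w_T^k W_1(P_S^k,P_T^k)$ above by $MW_1(P_S,P_T)$ via the paired-is-closest assumption and the coupling's marginal constraint, then invoke Theorem~\ref{w1_proof} to obtain $MW_1(P_S,P_T)\leq W_1(P_S,P_T)+4\sqrt{\epsilon}$. The marginal/index-orientation subtlety you flag as the ``main obstacle'' is genuine and is glossed over in the paper's own argument (it silently takes $\sum_{k'} w_{k,k'}=w_T^k$), so your explicit remark about needing the symmetric form of the assumption is, if anything, a clarification rather than a deviation.
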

\begin{proof}
Since $w \in \Pi(w_S,w_T)$, we can write out $w_T^k$ as  $\sum_{k^{\prime}=1}^K w_{k,k^{\prime}}$, then based on assumption 2, we have:
\begin{equation*}
\begin{split}
\sum_{k=1}^K w_T^k W_1(P_S^k,P_T^k) &= \sum_{k=1}^K \sum_{k^{\prime}=1}^K w_{k,k^{\prime}} W_1(P_S^k,P_T^k) \\
& \leq \sum_{k=1}^K \sum_{k^{\prime}=1}^K w_{k,k^{\prime}} W_1(P_S^k,P_T^{k^{\prime}}) \\
\end{split}
\end{equation*}
Thus we have,
\begin{equation}
\begin{split}
\sum_{k=1}^K w_T^k W_1(P_S^k,P_T^k) 
& \leq \min_{w \in \Pi(w_S,w_T)} \sum_{k=1}^K \sum_{k^{\prime}=1}^K w_{k,k^{\prime}} W_1(P_S^k,P_T^{k^{\prime}}) \\
& = MW_1(P_S,P_T) \\
\end{split}
\end{equation}

Also we prove in Theorem \ref{w1_proof} that:
$$MW_1 (P_S, P_T) \leq W_1 (P_S, P_T) + 4\sqrt{\epsilon}$$


Then we conclude our proof and show that:
\begin{equation}
\begin{split}
\sum_{k=1}^K w_T^k W_1(P_S^k,P_T^k) \leq MW_1(P_S,P_T)
\leq W_1 (P_S,P_T) + 4\sqrt{\epsilon}
\end{split}
\end{equation}

\end{proof}

\begin{theorem}[Cluster-based Regret Bound is stronger than the General Bound]\label{theorem:tigher_proof}
If the following two assumptions hold: 
\begin{enumerate}
    \item $\sum_{k=1}^{K}w^k_T\gamma_S^k(f) \leq \sum_{k=1}^{K}w^k_S\gamma_S^k(f)$.
    \item $X^k_S$ and $X^k_T$ have Gaussian distributions for $k = 1\dots K$. 
\end{enumerate}
Then we have 
\begin{equation}
    \sum_{k=1}^K w_T^k (\gamma^{k})^{\star} \leq \gamma^{\star}.
\end{equation}
Further, let 
$$
\epsilon_c(f) \doteq \sum_{k=1}^K w_T^k \gamma_S^k(f,g_S) + \sum_{k=1}^K w_T^k W_1(P_S^k,P_T^k) + \sum_{k=1}^K w_T^k (\gamma^{k})^{\star}
$$
denote the sub-domain-based generalization bound, let 
$$
\epsilon_g(f) \doteq \gamma_S(f,g_S) + W_1 (P_S, P_T) + \gamma^{\star}
$$
denote the general generalization bound without any sub-domain information. Then we have for all $f$,
\begin{equation}
    \epsilon_c(f) \leq \epsilon_g(f) + \delta_c
\end{equation}
\end{theorem}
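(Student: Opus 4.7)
The plan is to prove the two claims separately, and then combine them termwise to bound $\epsilon_c(f)$ by $\epsilon_g(f)+\delta_c$. The inequality Assumption~\eqref{eqn:ass_uda} provides the bridge between target-weighted and source-weighted sums of sub-domain errors, while Theorem~\ref{thm:weight_subdomain_distance_stronger} supplies the bound on the weighted sum of Wasserstein distances.

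First I would prove $\sum_{k=1}^K w_T^k (\gamma^{k})^{\star}\leq\gamma^{\star}$. Let $f^\star\in\mathbb{H}$ be a minimizer of $\gamma_S(f)+\gamma_T(f)$, so that $\gamma^{\star}=\gamma_S(f^\star)+\gamma_T(f^\star)$. By Lemma~\ref{lemma1} we can write $\gamma_S(f^\star)=\sum_k w_S^k\gamma_S^k(f^\star)$ and $\gamma_T(f^\star)=\sum_k w_T^k\gamma_T^k(f^\star)$. Applying Assumption~\eqref{eqn:ass_uda} to $f^\star$ yields $\sum_k w_T^k\gamma_S^k(f^\star)\leq\sum_k w_S^k\gamma_S^k(f^\star)=\gamma_S(f^\star)$, so that
\begin{equation*}
\gamma^{\star}\;\geq\;\sum_{k=1}^K w_T^k\bigl(\gamma_S^k(f^\star)+\gamma_T^k(f^\star)\bigr)\;\geq\;\sum_{k=1}^K w_T^k (\gamma^{k})^{\star},
\end{equation*}
where the last step uses the definition $(\gamma^k)^{\star}=\min_{f\in\mathbb{H}}\gamma_S^k(f)+\gamma_T^k(f)\leq\gamma_S^k(f^\star)+\gamma_T^k(f^\star)$.

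Next I would bound $\epsilon_c(f)$ termwise. The classification term satisfies $\sum_k w_T^k\gamma_S^k(f)\leq\sum_k w_S^k\gamma_S^k(f)=\gamma_S(f)$ by Assumption~\eqref{eqn:ass_uda} and Lemma~\ref{lemma1}. The discrepancy term satisfies $\sum_k w_T^k W_1(P_S^k,P_T^k)\leq W_1(P_S,P_T)+\delta_c$ directly from Theorem~\ref{thm:weight_subdomain_distance_stronger} (whose hypotheses we have assumed), with $\delta_c=4\sqrt{\epsilon}$. The constant term satisfies $\sum_k w_T^k(\gamma^k)^{\star}\leq\gamma^{\star}$ by the first part. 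Summing the three inequalities gives $\epsilon_c(f)\leq\epsilon_g(f)+\delta_c$ for every $f\in\mathbb{H}$.

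Conceptually, none of the three subproblems is hard once the right invocation is identified: the first part is a one-line chain using Lemma~\ref{lemma1} at an optimizer, the third part is Theorem~\ref{thm:weight_subdomain_distance_stronger} verbatim, and only the classification term needs the new Assumption~\eqref{eqn:ass_uda}. The main subtlety to watch for is that Assumption~\eqref{eqn:ass_uda} must hold uniformly in $f$ (including at the specific $f^\star$ used to define $\gamma^{\star}$), so I would explicitly note in the write-up that we use the assumption at both $f^\star$ (for the first claim) and at an arbitrary $f\in\mathbb{H}$ (for the termwise bound of $\epsilon_c$). No further estimates beyond these substitutions are required.
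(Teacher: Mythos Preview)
Your proposal is correct and follows essentially the same approach as the paper: decompose $\gamma^\star$ via Lemma~\ref{lemma1}, use Assumption~\eqref{eqn:ass_uda} to swap the $w_S^k$-weights for $w_T^k$-weights on the source errors, and then bound each $\gamma_S^k(f^\star)+\gamma_T^k(f^\star)$ below by $(\gamma^k)^\star$. The only cosmetic difference is that the paper carries the $\min_{f\in\mathbb{H}}$ through the chain rather than fixing an explicit minimizer $f^\star$; moreover, your termwise comparison for $\epsilon_c(f)\leq\epsilon_g(f)+\delta_c$ is actually spelled out more fully than in the paper, which only writes out the $(\gamma^k)^\star$ inequality and leaves the remaining two terms implicit.
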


\begin{proof}
We will proove that $\sum_{k=1}^K w_T^k (\gamma^{k})^{\star} \leq \gamma^{\star}$, where $\gamma^{\star} = \min_{f \in \mathbb{H}} \gamma_S (f,g_S) + \gamma_T (f,g_T)$, $(\gamma^{k})^{\star} = \min_{f \in \mathbb{H}} \gamma_S^k (f,g_S) + \gamma_T^k (f,g_T)$

We have:
\begin{equation}
\begin{split}
  \gamma^{\star} & = \min_{f \in \mathbb{H}} \left(\gamma_S (f,g_S) + \gamma_T (f,g_T)\right)\\
  & = \min_{f \in \mathbb{H}} \left(\sum_{k=1}^K w_S^k  \gamma_{S}^k(f,g_S) +  \sum_{k=1}^K w_T^k  \gamma_{T}^k(f,g_T)\right) \\
  & = \min_{f \in \mathbb{H}} \left( \sum_{k=1}^K w_T^k  \gamma_{S}^k(f,g_S) +  \sum_{k=1}^K w_T^k  \gamma_{T}^k(f,g_T) + \sum_{k=1}^K w_S^k  \gamma_{S}^k(f,g_S) -  w_T^k  \gamma_{S}^k(f,g_S)\right) \\
  & = \min_{f \in \mathbb{H}} \left( \sum_{k=1}^K w_T^k (\gamma_{S}^k(f,g_S) + \gamma_{T}^k(f,g_T)) + \sum_{k=1}^K (w_S^k - w_T^k) \gamma_{S}^k(f,g_S)\right) \\
  & \geq \min_{f \in \mathbb{H}} \left(\sum_{k=1}^K w_T^k (\gamma_{S}^k(f,g_S) + \gamma_{T}^k(f,g_T))\right) \\
  & \geq \sum_{k=1}^K \min_{f \in \mathbb{H}} \left(w_T^k (\gamma_{S}^k(f,g_S) + \gamma_{T}^k(f,g_T))\right) \\
  & = \sum_{k=1}^K w_T^k (\gamma^{k})^{\star}
 \end{split}
\end{equation}
where 
$(\gamma^{k})^{\star} = \min_{f \in \mathbb{H}} \gamma_S^k (f,g_S) + \gamma_T^k (f,g_T)$

The fifth step is based on the assumption that $\sum_{k=1}^{K}w^k_T\gamma_S^k(f) \leq \sum_{k=1}^{K}w^k_S\gamma_S^k(f)$.
The sixth step is based on $\min \{f(x) + g(x)\} \geq \min \{f(x)\} + \min \{g(x)\}$\\
\end{proof}

\section{Algorithm}
\label{sec:algorithm}
Our framework is outlined in pseudo-code in Algorithm \ref{alg:alg1}. 
\begin{algorithm}[h]
   \caption{Domain Adaptation via Rebalanced Sub-domain Alignment(DARSA)}
   \label{alg:alg1}
  \begin{algorithmic}
\STATE {\bfseries Input:} Source data $X_S$; Source label $y_S$, Target data $X_T$; coefficient $\lambda_Y, \lambda_D, \lambda_c, \lambda_a$; learning rate $\alpha$; \\
\STATE Pretrain feature extractor and classifier with $X_S$ and $y_S$, initialize $\theta_E^S$, $\theta_E^T$, and $\theta_Y$ with pretrained weights. Initialize $w_T^k$ and $w_T^k$ with 1/K for k = 1,2 ..., K
\REPEAT
\STATE Sample minibatch from $X_S$ and $X_T$
\STATE $\theta_Y \rightarrow \theta_Y - \alpha \nabla_{\theta_Y} (\lambda_Y \mathcal{L}_Y + \lambda_D \mathcal{L}_D + \lambda_c \mathcal{L}_{intra} + \lambda_a \mathcal{L}_{inter})$
\STATE $\theta_E^S \rightarrow\theta_E^S - \alpha \nabla_{\theta_E^S} (\lambda_Y \mathcal{L}_Y + \lambda_D \mathcal{L}_D + \lambda_c \mathcal{L}_{intra} + \lambda_a \mathcal{L}_{inter})$
\STATE $\theta_E^T \rightarrow\theta_E^T - \alpha \nabla_{\theta_E^T} (\lambda_D \mathcal{L}_D + \lambda_c \mathcal{L}_{intra} + \lambda_a \mathcal{L}_{inter})$
\UNTIL $\theta_E^S$, $\theta_E^T$, and $\theta_Y$ converge
\end{algorithmic}
\end{algorithm}

\section{Details of Experimental Setup: Digit Datasets with Shifted Class Distribution}
\label{digit_intro}
\subsection{Details of the Digit Datasets with Shifted Class Distribution}
\label{digit_intro_details_dataset}
\textbf{MNIST $\rightarrow$ MNIST-M}: For source dataset, we randomly sample 36000 images from MNIST training set with odd digits three times the even digits. For target dataset, we randomly sample 6000 images from MNIST-M constructed from MNIST testing set, with even digits three times the odd digits. To create MNIST-M dataset, we follow the procedure outlined in \cite{ganin2016domain} to blend digits from the MNIST over patches randomly extracted from color photos in the BSDS500 dataset. \cite{arbelaez2010contour}. 

\textbf{MNIST-M $\rightarrow$ MNIST}: For source dataset, we randomly sample 36000 images from MNIST-M constructed from MNIST training set, with even digits three times the odd digits. For target dataset, we randomly sample 5800 images from MNIST testing set, with odd digits three times the even digits.

\textbf{USPS $\rightarrow$ MNIST}: For source dataset, we randomly sample  3600 images from USPS training set, with even digits three times the odd digits. For target dataset, we randomly sample 5800 images from MNIST testing set, with odd digits three times the even digits.

\textbf{SVHN $\rightarrow$ MNIST}: For source dataset, we randomly sample  30000 images from SVHN training set, with even digits three times the odd digits. For target dataset, we randomly sample 5800 images from MNIST testing set, with odd digits three times the even digits.

\subsection{Additional Empirical Analysis of our Proposed Generalization
Bound}
\label{emp_tighter_more}
In our empirical analysis of our proposed generalization bound, we evaluate the bound on three additional task: 1) MNIST-M to MNIST, 2) USPS to MNIST and 3) SVHN to MNIST. All experiments are performed on our customized digit datasets with shifted class distribution (described in \ref{digit_intro_details_dataset}). As shown in Figure \ref{bound_tighter_add_figure}, our empirical results demonstrate that our theory, i.e., Theorem \ref{theorem2} provides a stronger generalization bound than than Theorem \ref{theorem1}.

\begin{figure}[h]
\begin{center}
\subfigure[Domain discrepancy]{\includegraphics[width=0.24\columnwidth]{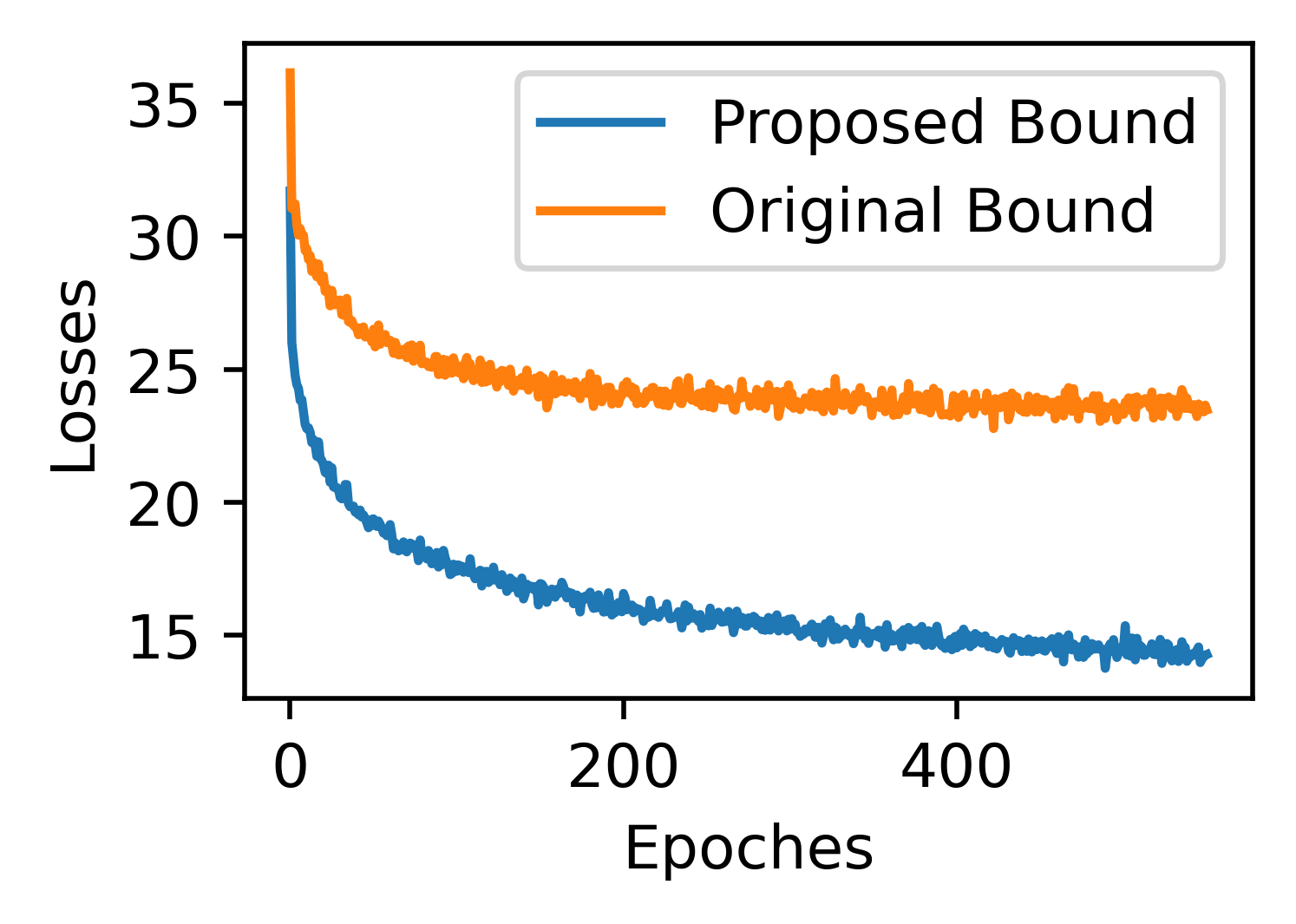}}
\subfigure[Source Classification Loss]{\includegraphics[width=0.25\columnwidth]{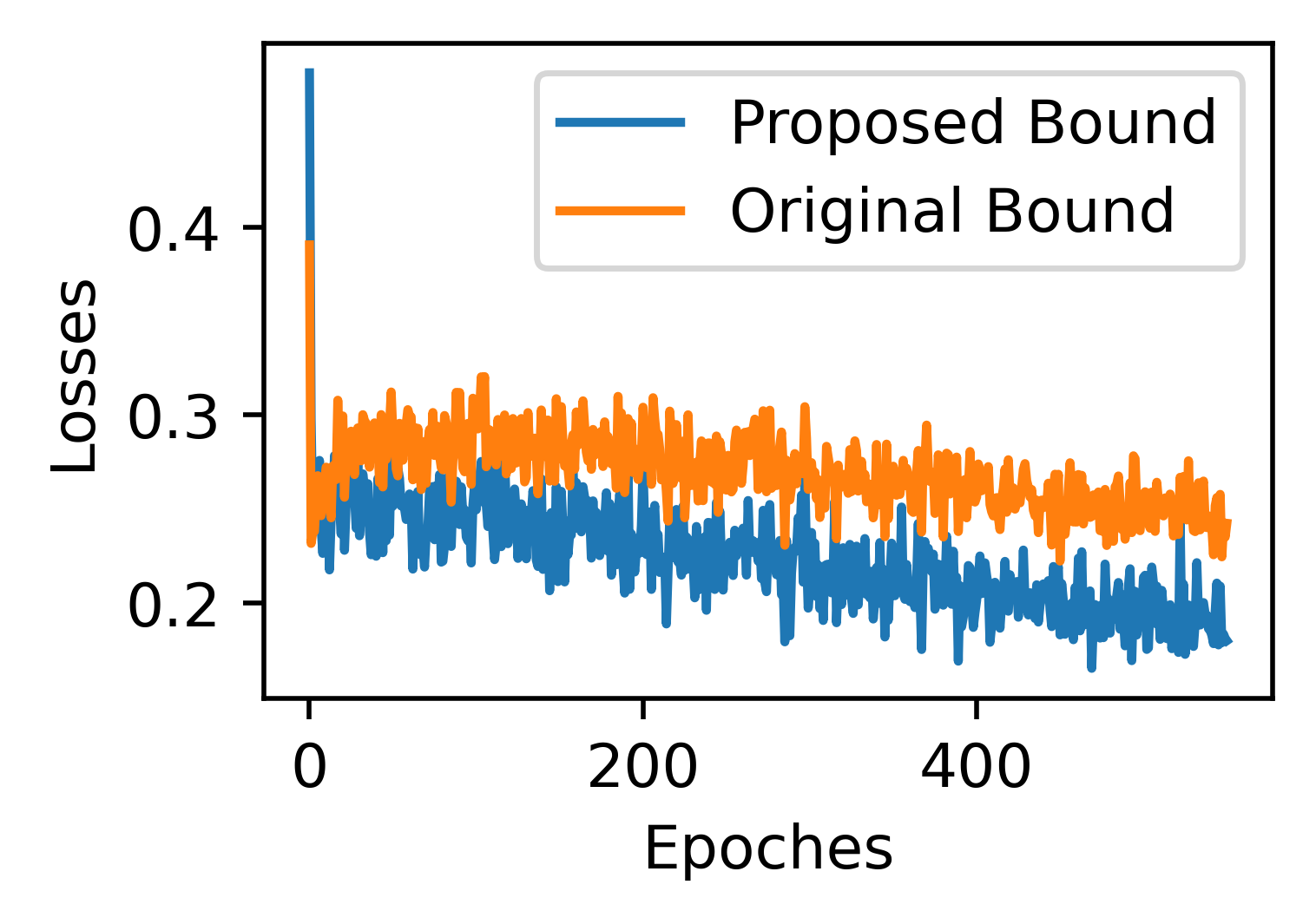}} 
\subfigure[Domain discrepancy]{\includegraphics[width=0.24\columnwidth]{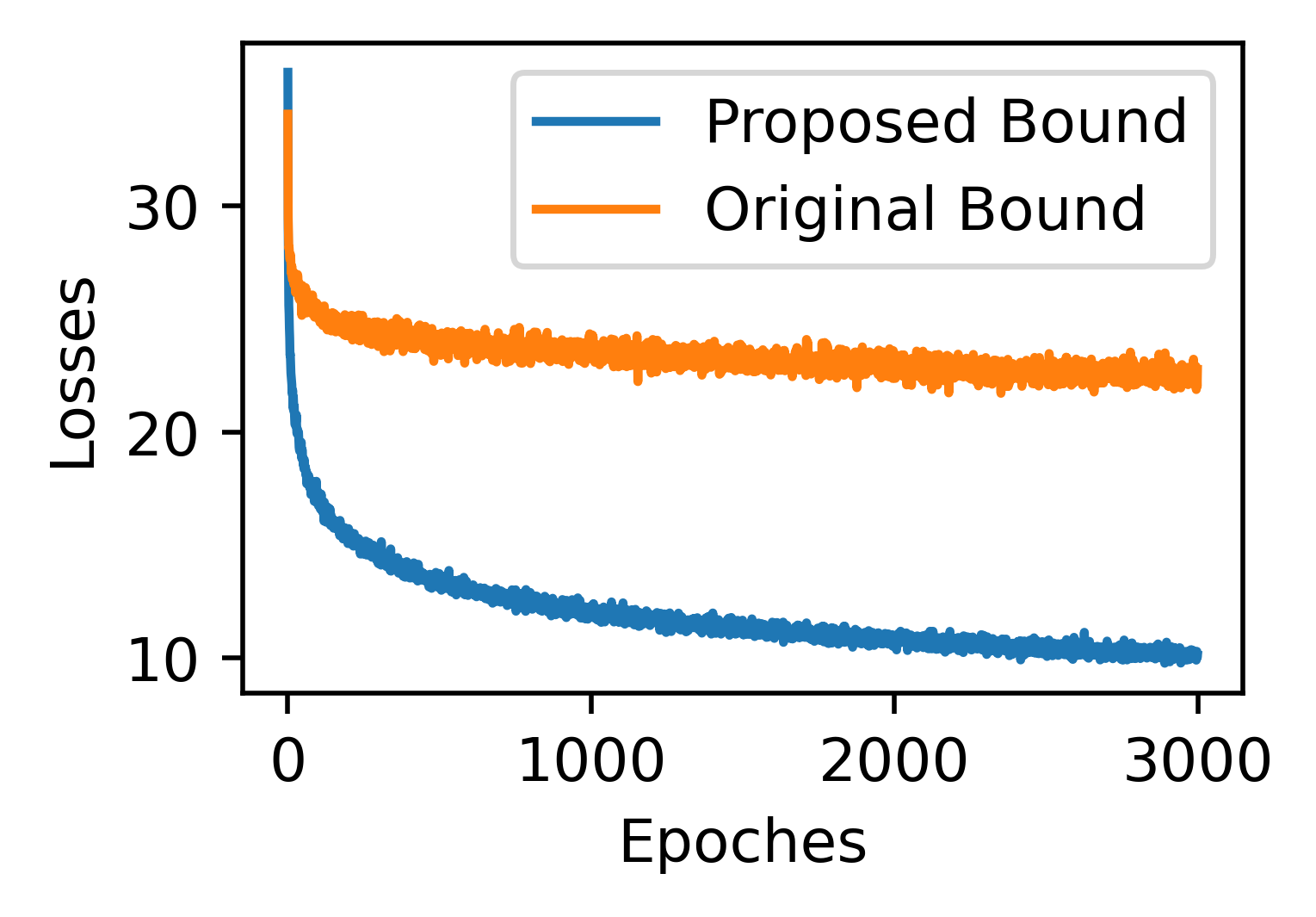}}
\subfigure[Source Classification Loss]{\includegraphics[width=0.25\columnwidth]{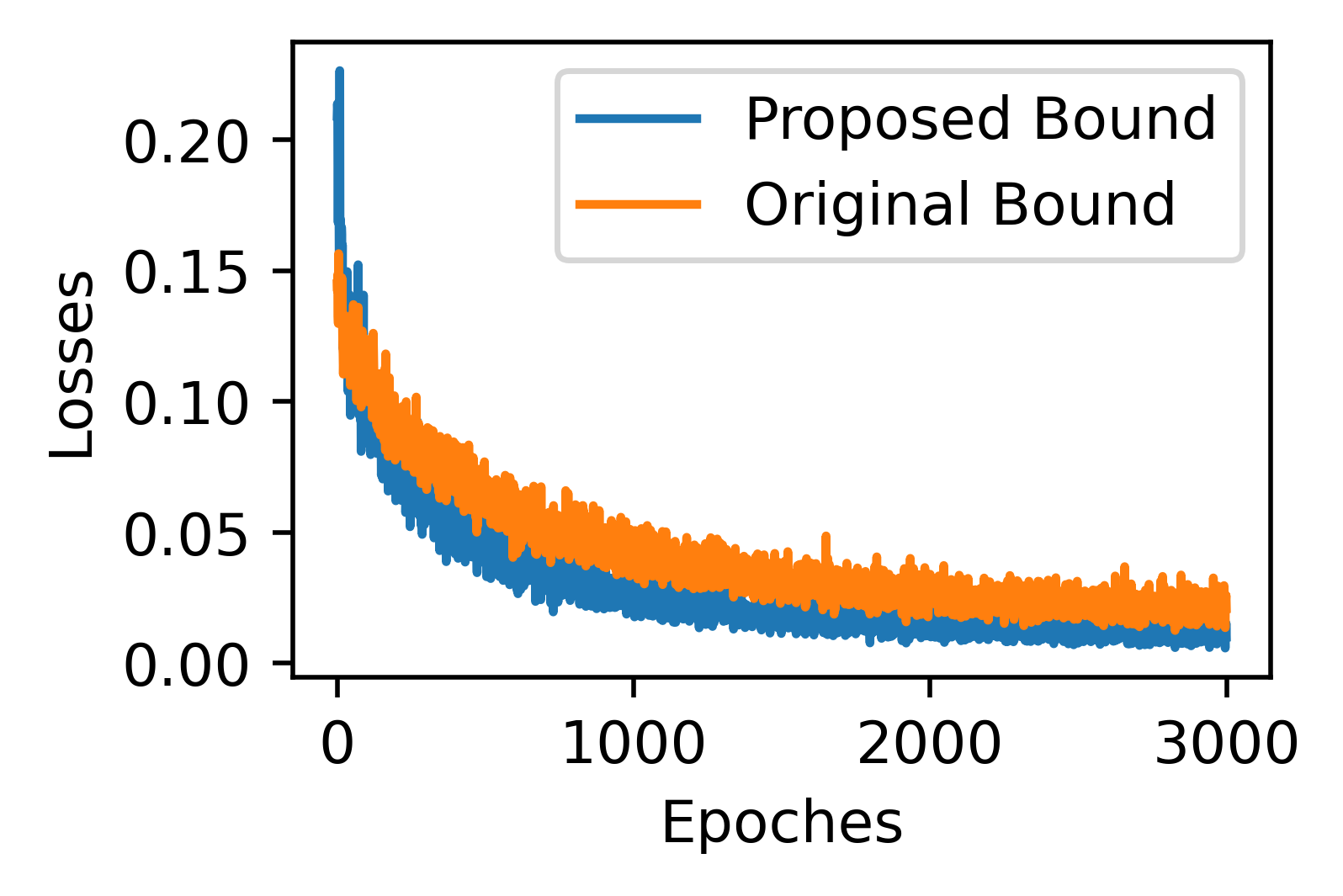}} \\
\subfigure[Domain discrepancy]{\includegraphics[width=0.24\columnwidth]{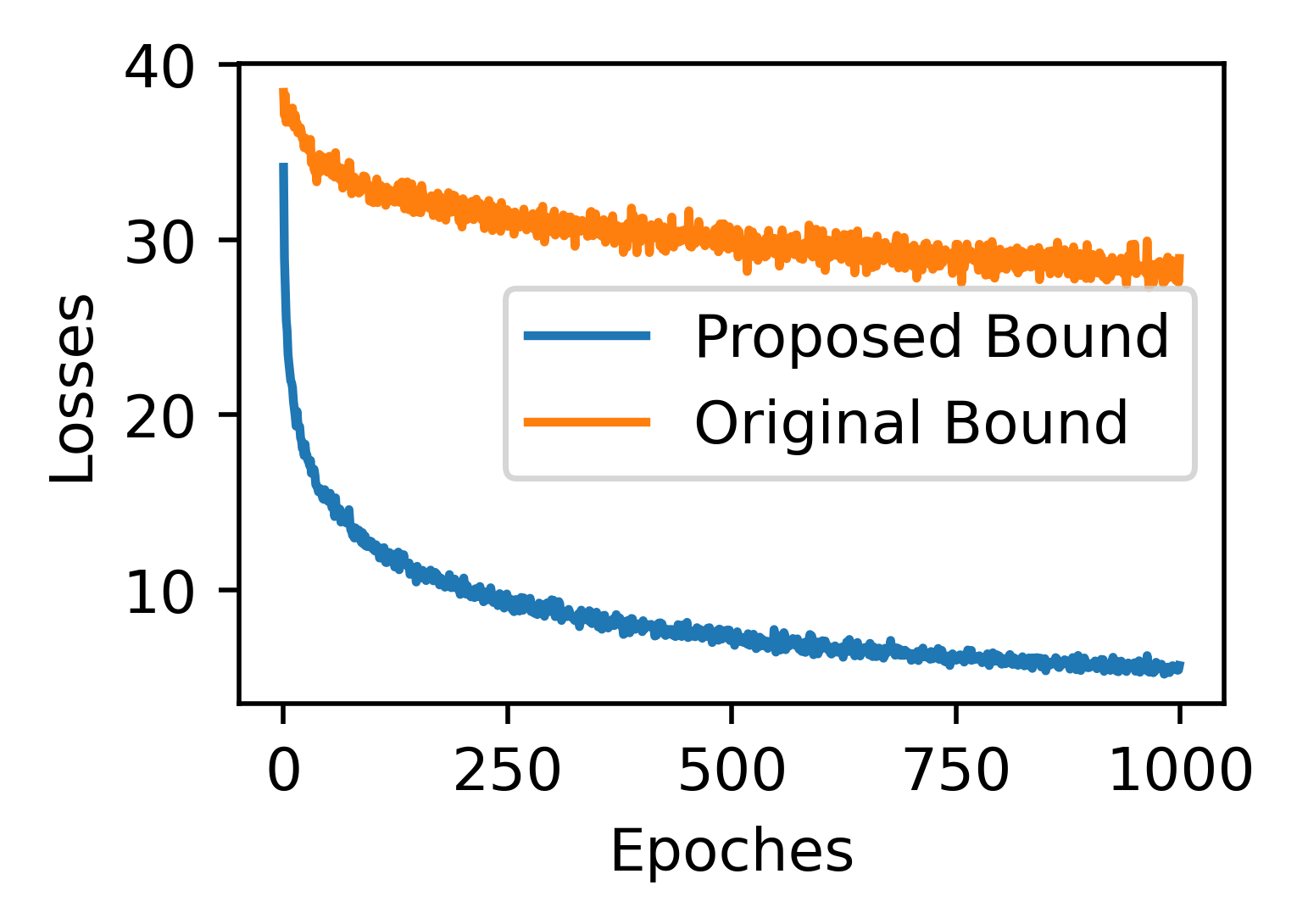}}
\subfigure[Source Classification Loss]{\includegraphics[width=0.25\columnwidth]{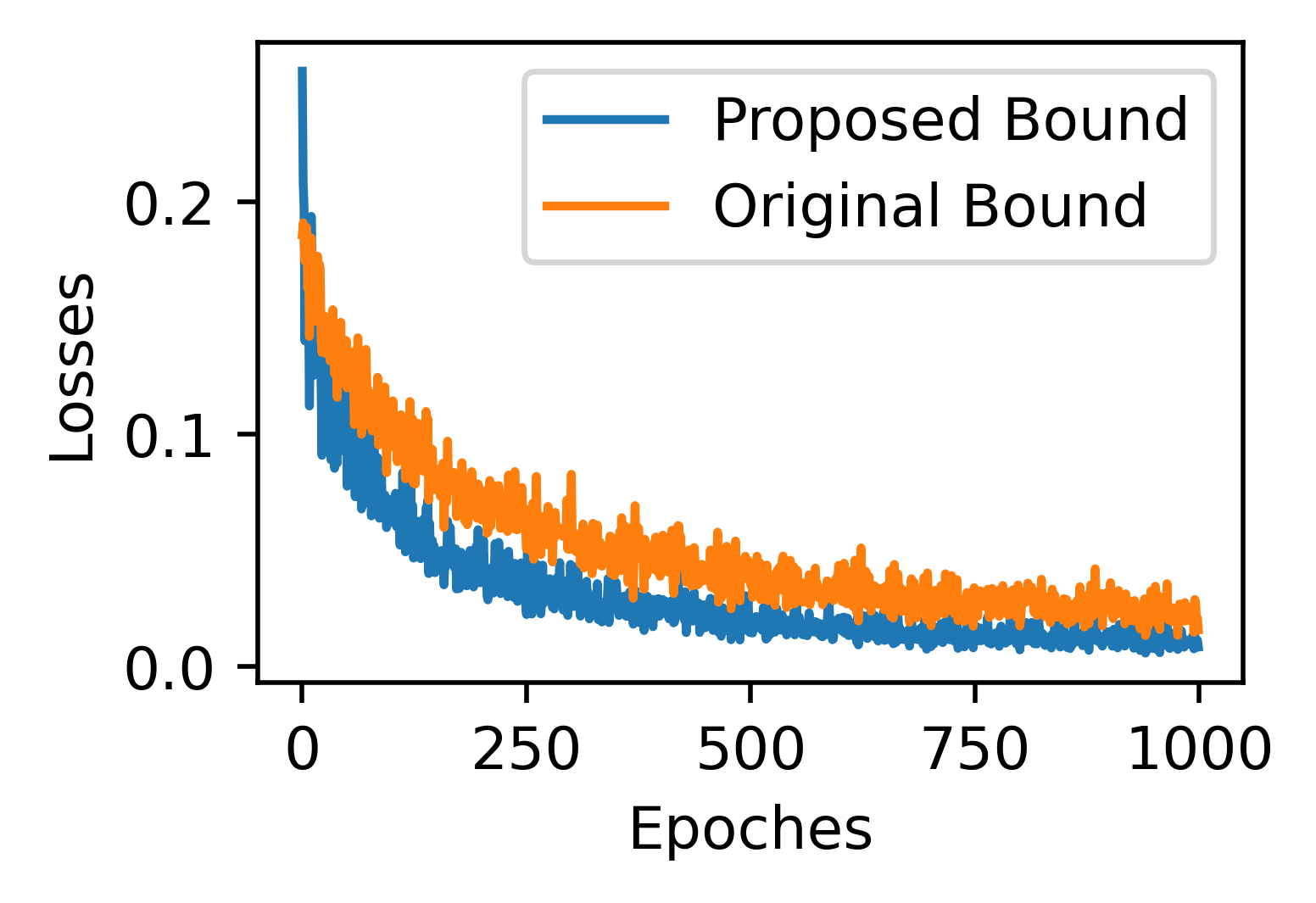}} \\
\caption{For tasks with shifted class distribution (1) we compare the domain discrepancy term ($\mathcal{L}_D$) in our proposed bound to that in Remark \ref{remark:smooth}, (2) Compare the source classification term ($\mathcal{L}_Y$) in our proposed bound to that in \ref{theorem1}. Task results shown in each subfigure: (a)(b) MNIST-M to MNIST, (c)(d) USPS to MNIST, (e)(f) SVHN to MNIST. All experiments are performed on our customized digit datasets with shifted class distribution (as described in \ref{digit_intro_details_dataset})}
\label{bound_tighter_add_figure}
\end{center}
\vskip -0.2in
\end{figure}

\subsection{Model Structures}
For feature extractor, we employ a network structure similar to LeNet-5 (\cite{lecun1998gradient}), but with minor modifications: the first convolutional layer produces 10 feature maps, the second convolutional layer produces 20 feature maps, and we use ReLU as an activation function for the hidden layer. Our feature space has 128 dimensions. For benchmarks, we utilize the network structures provided in the benchmark source code. In cases where experiments are not included in the source code, we use the same network architecture as our model to ensure fair comparisons. For classifier, we use a network structure with three fully connected layers with ReLU activation and a dropout layer with a rate of 0.5. Further details of each experiment can be found in our code. 

\subsection{Model hyperparameters}
\label{digit_hyper_details}
We use Adaptive Experimentation (Ax) platform \cite{bakshyopen2019,letham2019constrained}, an automatic tuning approaches to select hyperparameters to maximize the performance of our method. We use Bayesian optimization supported by Ax with 20 iterations to decide the hyperparameter choice. We note that most of the SOTA comparisons are
not specifically designed for shifted class distribution scenarios, and this setting caused issues in several competing. We used Ax to maximize their performance in domain shifting scenarios. Detailed model hyperparameters used for the class distribution shifting digits datasets are provided in Table \ref{tab:model_para_digit}.

\begin{table}[t!]
\centering
\begin{tabular}{c|cccc} 
\hline
~  & \begin{tabular}[c]{@{}c@{}}MNIST to \\~MNIST-M\end{tabular}  & \begin{tabular}[c]{@{}c@{}}MNIST-M to \\~MNIST\end{tabular} 
 & \begin{tabular}[c]{@{}c@{}}USPS to \\~MNIST\end{tabular}   & \begin{tabular}[c]{@{}c@{}}SVHN to \\~MNIST\end{tabular} \\ 
\hline
DARSA  & \begin{tabular}[c]{@{}c@{}} batch size = 512, \\ $\alpha= 0.01$, $\lambda_Y = 0.4$, \\ $\lambda_D = 0.35$, $\lambda_c = 1 $, \\ $\lambda_a = 0.9$, \\ m = 30 \\ SGD, momentum = 0.5~\end{tabular}  &  \begin{tabular}[c]{@{}c@{}} batch size = 512, \\ $\alpha= 0.01$, $\lambda_Y = 1$, \\ $\lambda_D = 0.5$, $\lambda_c = 1 $, \\ $\lambda_a = 1$, \\ m = 30 \\ SGD, momentum = 0.4~\end{tabular} & \begin{tabular}[c]{@{}c@{}} batch size = 256,\\ , $\alpha= 0.01$, $\lambda_Y = 1$, \\ $\lambda_D = 0.5$, $\lambda_c = 1 $, \\ $\lambda_a = 1$, \\m = 30 \\ SGD, momentum = 0.4~\end{tabular} & \begin{tabular}[c]{@{}c@{}} batch size = 256, \\ $\alpha= 0.05$, $\lambda_Y = 0.95$, \\ $\lambda_D = 0.11$, $\lambda_c = 0.3 $, \\ $\lambda_a = 0.11$,\\ m = 50 \\ SGD, momentum = 0.4~\end{tabular} \\
\hline
DANN    & \begin{tabular}[c]{@{}c@{}} batch size = 32 \\Adam, \\ learning rate = 1e-4 ~\end{tabular}         & \begin{tabular}[c]{@{}c@{}} batch size = 32 \\Adam, \\ learning rate = 1e-5 ~\end{tabular}   & \begin{tabular}[c]{@{}c@{}} batch size = 32 \\Adam, \\ learning rate = 1e-4 ~\end{tabular}  & \begin{tabular}[c]{@{}c@{}} batch size = 64 \\Adam, \\ learning rate = 1e-4 ~\end{tabular}    \\ 
\hline
WDGRL & \begin{tabular}[c]{@{}c@{}} batch size = 32 \\Adam, \\ learning rate = 1e-5, \\ $\gamma$ = 10, \\
critic training step: 1, \\
feature extractor \\
and discriminator \\ training step: 3 ~\end{tabular}  & \begin{tabular}[c]{@{}c@{}} batch size = 64 \\Adam, \\ learning rate = 1e-4, \\ $\gamma$ = 10, \\
critic training step: 5, \\
feature extractor \\ 
and discriminator \\ training step: 10  ~\end{tabular}   & \begin{tabular}[c]{@{}c@{}} batch size = 32 \\Adam, \\ learning rate = 1e-4, \\ $\gamma$ = 10, \\
critic training step: 1, \\
feature extractor\\
and discriminator \\ training step: 2 ~\end{tabular}   & \begin{tabular}[c]{@{}c@{}} batch size = 32 \\Adam, \\ learning rate = 1e-5, \\ $\gamma$ = 10, \\
critic training step: 1, \\
feature extractor \\
and discriminator \\ training step: 3 ~\end{tabular}         \\ 
\hline
DSN & \begin{tabular}[c]{@{}c@{}} batch size = 32 \\SGD, momentum = 0.8, \\
learning rate = 1e-2, \\$\alpha = 0.01$, \\
$\beta = 0.075, \gamma = 0.25$ ~\end{tabular}  & \begin{tabular}[c]{@{}c@{}} batch size = 32 \\SGD, momentum = 0.8, \\
learning rate = 0.01, \\$\alpha = 0.01$, \\
$\beta = 0.075, \gamma = 0.25$ ~\end{tabular}  & 
\begin{tabular}[c]{@{}c@{}} batch size = 32 \\SGD, momentum = 0.8, \\
learning rate = 0.01, \\$\alpha = 0.01$, \\
$\beta = 0.075, \gamma = 0.4$ ~\end{tabular} & 
\begin{tabular}[c]{@{}c@{}} batch size = 512 \\SGD, momentum = 0.5, \\
learning rate = 1e-5, \\$\alpha = 0.46$, \\
$\beta = 0.61, \gamma = 0.92$ ~\end{tabular} \\
\hline
\begin{tabular}[c]{@{}l@{}} ADDA \end{tabular} & \begin{tabular}[c]{@{}c@{}} batch size = 64 \\Adam, \\
learning rate = 1e-3, \\ 
critic training step: 1, \\
target model \\
training step: 10 ~\end{tabular}  & \begin{tabular}[c]{@{}c@{}} batch size =64 \\Adam,\\
learning rate = 1e-5, \\ 
critic training step: 1, \\
target model \\
training step: 1 ~\end{tabular} & \begin{tabular}[c]{@{}c@{}} batch size = 64 \\Adam,\\ learning rate = 1e-3, \\ 
critic training step: 1, \\
target model \\
training step: 1 ~\end{tabular} & \begin{tabular}[c]{@{}c@{}} batch size = 64 \\Adam,\\ learning rate = 1e-3, \\ 
critic training step: 3, \\
target model \\
training step: 2 ~\end{tabular} \\ 
\hline
\begin{tabular}[c]{@{}l@{}} CAT \end{tabular}     
& \begin{tabular}[c]{@{}c@{}} batch size = 512 \\SGD, \\
learning rate = $ \frac{0.01}{(1+10p)^{0.75}}$ , \\
momentum = 0.9, \\
p = 0.9, \\
m = 30 ~\end{tabular}         
& \begin{tabular}[c]{@{}c@{}} batch size = 128 \\SGD, \\
learning rate = $ \frac{0.01}{(1+10p)^{0.75}}$, \\
momentum = 0.9, \\
p = 0.9, \\
m = 30 ~\end{tabular}    
& \begin{tabular}[c]{@{}c@{}} batch size = 128 \\SGD, \\
learning rate = $ \frac{0.01}{(1+10p)^{0.75}}$, \\
momentum = 0.9, \\
p = 0.9, \\
m = 30 ~\end{tabular}        
& \begin{tabular}[c]{@{}c@{}} batch size = 256 \\SGD, \\
learning rate = $ \frac{0.01}{(1+10p)^{0.75}}$, \\
momentum = 0.9, \\
p = 0.9, \\
m = 30 ~\end{tabular}     \\ 
\hline
\begin{tabular}[c]{@{}l@{}} CDAN \end{tabular}    
& \begin{tabular}[c]{@{}c@{}} batch size = 64 \\SGD, momentum = 0.9, \\learning rate =0.01 \end{tabular}    
& \begin{tabular}[c]{@{}c@{}} batch size = 64 \\SGD, momentum = 0.9, \\learning rate =0.01 \end{tabular}    
& \begin{tabular}[c]{@{}c@{}} batch size = 64 \\SGD, momentum = 0.9, \\learning rate =0.01 \end{tabular}    
&  \begin{tabular}[c]{@{}c@{}} batch size = 64 \\SGD, momentum = 0.9, \\learning rate =0.1 \end{tabular}    \\ 
\hline
pixelDA   
& \begin{tabular}[c]{@{}c@{}} batch size = 64 \\Adam \\
learning rate =0.0002,\\
dim of the \\ noise input: 10 \end{tabular}    
& \begin{tabular}[c]{@{}c@{}} batch size = 64 \\Adam \\ learning rate =0.0002,\\
dim of the \\ noise input: 10 \end{tabular}    
& \begin{tabular}[c]{@{}c@{}} batch size = 32 \\Adam, \\ learning rate =0.0001,\\
dim of the \\ noise input: 20 \end{tabular}     
&  \begin{tabular}[c]{@{}c@{}} batch size = 32 \\Adam, \\ learning rate =0.0001,\\
dim of the \\ noise input: 20\end{tabular}   \\
\hline
DRANet  & \begin{tabular}[c]{@{}c@{}} batch size = 32 \\ Adam \end{tabular}     
& \begin{tabular}[c]{@{}c@{}} batch size = 32 \\Adam \end{tabular}      
& \begin{tabular}[c]{@{}c@{}} batch size = 32 \\Adam \end{tabular}     
& \begin{tabular}[c]{@{}c@{}} batch size = 32 \\Adam \end{tabular}    
\end{tabular}
\caption{Model hyperparameters used for digits datasets with class distribution shifting}
\label{tab:model_para_digit}
\end{table}

\section{Details of Experimental Setup: TST Dataset with class distribution shifting}
\label{neural_intro}
\subsection{Details of the TST Dataset  with Shifted Class Distribution}
The Tail Suspension Test (TST) dataset \cite{gallagher2017cross} consists of 26 mice recorded from two genetic backgrounds, Clock-$\Delta$19 and wildtype. Clock-$\Delta$19 is a genotype which has been proposed as a model of bipolar disorder while wildtype is considered as a typical or common genotype. Local field potentials (LFPs) are recorded from 11 brain regions and segmented into 1 second windows. For each window, power spectral density, coherence, and granger causality features are derived. Each mouse is placed through 3 behavioral contexts while collecting LFP recordings: home cage, open field, and tail-suspension. Mice spent 5 minutes in the home cage which is considered a baseline or low level of distress behavioral context. Mice spent 5 minutes in the open field context which is considered a moderate level of distress. Mice then spent 10 minutes in the tail suspension test which is a high distress context. Detailed model hyperparameters used for the class distribution shifting TST datasets are provided in Table \ref{tab:model_para_TST}. 

\subsection{Model Structures}
For feature extractor of the wildtype to bipolar task  we use a network structure consisting of: a fully connected layer that maps our data to a feature space of 256 dimensions, with a LeakyReLU activation function; a fully connected layer that maps the feature space to 128 dimensions, and a Softplus activation function. For the bipolar to wildtype task, we use a network structure that includes: a fully connected layer that maps our data to a feature space of 256 dimensions, with a ReLU activation function; a fully connected layer that maps the feature space to 128 dimensions, with another ReLU activation function. For the classifier, we use a network structure that includes: three fully connected layers with ReLU activation and a dropout layer with a rate of 0.5. For benchmarks, we use the same network structures as our model to ensure fair comparisons, with the exception of DSN which has two fully connected layers with ReLU activation. Additional details on each experiment can be found in our code. 

\subsection{Model hyperparameters}
Again, we use Adaptive Experimentation (Ax) platform \cite{bakshyopen2019,letham2019constrained}, an automatic tuning approaches to select hyperparameters to maximize the performance of our method. We use Bayesian optimization supported by Ax with 20 iterations to decide the hyperparameter choice. We note that most of the SOTA comparisons are not specifically designed for shifted class distribution scenarios, and this setting caused issues in several competing. We used Ax to maximize their performance in domain shifting scenarios. Detailed model hyperparameters used for the class distribution shifting digits datasets are provided in Table \ref{tab:model_para_digit}. 

\begin{table}[t]
\centering
\begin{tabular}{c|cc} 
\hline
~  & Bipolar to Wildtype  & Wildtype to Wildtype  \\ 
\hline
DARSA  & \begin{tabular}[c]{@{}c@{}} batch size = 128, \\ $\alpha$=1e-4, $\lambda_Y = 1$, \\ $\lambda_D = 0.4$, $\lambda_c = 0.1 $, \\ $\lambda_a = 0.9$, \\ m = 50 \\ SGD, momentum = 0.6~\end{tabular}  &  \begin{tabular}[c]{@{}c@{}} batch size = 128, \\ $\alpha= 0.001$, $\lambda_Y = 0.7$, \\ $\lambda_D = 0.1$, $\lambda_c = 0.1 $, \\ $\lambda_a = 1$, \\ m = 50 \\ SGD, momentum = 0.3~\end{tabular} \\
\hline
DANN    & \begin{tabular}[c]{@{}c@{}} batch size = 32 \\Adam, \\ learning rate = 1e-4 ~\end{tabular}  & \begin{tabular}[c]{@{}c@{}} batch size = 32 \\Adam, \\ learning rate = 1e-4 ~\end{tabular}    \\ 
\hline
WDGRL & \begin{tabular}[c]{@{}c@{}} batch size = 32 \\Adam, \\ learning rate = 1e-4, \\ $\gamma$ = 10, \\
critic training step: 1, \\
feature extractor\\
and discriminator \\ training step: 2 ~\end{tabular}   & \begin{tabular}[c]{@{}c@{}} batch size = 32 \\Adam, \\ learning rate = 1e-5, \\ $\gamma$ = 10, \\
critic training step: 1, \\
feature extractor \\
and discriminator \\ training step: 3 ~\end{tabular}         \\ 
\hline
DSN  & \begin{tabular}[c]{@{}c@{}} batch size = 64 \\SGD, momentum = 0.5, \\
learning rate = 0.1, \\$\alpha = 1$, \\
$\beta = 1, \gamma = 1$ ~\end{tabular}  & \begin{tabular}[c]{@{}c@{}} batch size = 32 \\SGD, momentum = 0.5, \\
learning rate = 0.1, \\$\alpha = 1$, \\
$\beta = 1, \gamma = 1$ ~\end{tabular} \\
\hline
\begin{tabular}[c]{@{}l@{}} CAT \end{tabular}     
& \begin{tabular}[c]{@{}c@{}} batch size = 128 \\SGD, \\
learning rate = $ \frac{0.01}{(1+10p)^{0.75}}$, \\
momentum = 0.9, \\
p = 0.9, \\
m = 3 ~\end{tabular}         
& \begin{tabular}[c]{@{}c@{}} batch size = 128 \\SGD, \\
learning rate = $ \frac{0.01}{(1+10p)^{0.75}}$, \\
momentum = 0.9, \\
p = 0.9, \\
m = 3 ~\end{tabular}     \\
\hline 
\begin{tabular}[c]{@{}l@{}} CDAN \end{tabular}    
& \begin{tabular}[c]{@{}c@{}} batch size = 64 \\SGD, momentum = 0.9, \\learning rate = 0.1 \end{tabular}  
& \begin{tabular}[c]{@{}c@{}} batch size = 64 \\SGD, momentum = 0.9, \\learning rate = 0.1 \end{tabular}  \\ 
\hline
\end{tabular}
\caption{Model hyperparameters used for the class distribution shifting TST datasets}
\label{tab:model_para_TST}
\end{table}

\section{Accessibility of the Datasets}
\renewcommand\thefigure{\thesection\arabic{figure}}
\setcounter{figure}{0}
\label{appx:E}
The MNIST, BSDS500, USPS, and SVHN datasets are publicly available with an open-access license. The Tail Suspension Test (TST) dataset \cite{gallagher2017cross} is available by request from the authors of \cite{gallagher2017cross}.

\end{document}